\documentclass[sigconf,natbib=true,screen,anonymous=false]{acmart}
\usepackage{xspace,balance,tabularx,multirow}
\usepackage{flushend}
\usepackage{xcolor}
\usepackage{tikz}
\usetikzlibrary{plotmarks}
\usepackage{pgfplots}
\pgfplotsset{compat=1.16}
\usetikzlibrary{patterns}
\usepackage{subfig}
\usepackage[ruled, vlined, linesnumbered]{algorithm2e}
\usepackage{colortbl}
\usepackage{bbold}
\SetKwComment{Comment}{$\triangleright$\ }{}
\usepackage{enumitem}
\usepackage{tablefootnote}
\usepackage{upgreek,textgreek}
\usepackage{pifont}% http://ctan.org/pkg/pifont
\usepackage[noabbrev]{cleveref}
\usepackage{titlecaps}
\usepackage{lipsum}
\usepackage{makecell}
\usepackage{subcaption}
\usepackage[ruled, vlined, linesnumbered]{algorithm2e}
\usepackage{booktabs}

\usepackage{listings} 

\lstdefinestyle{prompttemplate}{
    backgroundcolor=\color{gray!5},  
    basicstyle=\small\ttfamily,  
    frame=single,  
    frameround=tttt,  
    breaklines=true,  
    columns=fullflexible, 
    numbers=none,  
    upquote=true  
}

\lstdefinestyle{datasetstyle}{
    backgroundcolor=\color{blue!3},  
    basicstyle=\small\ttfamily,  
    frame=single,
    frameround=tttt,
    breaklines=true,
    columns=fullflexible,
    numbers=none
}

\lstdefinestyle{keywordstyle}{
    backgroundcolor=\color{green!5},  
    basicstyle=\small\ttfamily,  
    frame=single,  
    frameround=tttt,  
    breaklines=true,  
    columns=fullflexible, 
    numbers=none,  
    upquote=true  
}

\lstdefinestyle{summarystyle}{
    backgroundcolor=\color{red!5},  
    basicstyle=\small\ttfamily,  
    frame=single,
    frameround=tttt,
    breaklines=true,
    columns=fullflexible,
    numbers=none
}

\usepackage{amsmath}

\pgfplotsset{every tick label/.append style={font=\tiny}}

\newlength{\starsize}
\newlength{\starspread}
\tikzset{starsize/.code={\setlength{\starsize}{#1}},
         starspread/.code={\setlength{\starspread}{#1}}}
\tikzset{starsize=1mm,
         starspread=3mm}
\pgfdeclarepatternformonly[\starspread,\starsize]% variables
  {my fivepointed stars}% name
  {\pgfpointorigin}% lower left corner
  {\pgfqpoint{\starspread}{\starspread}}% upper right corner
  {\pgfqpoint{\starspread}{\starspread}}% tilesize
  {% shape description
   \pgftransformshift{\pgfqpoint{\starsize}{\starsize}}
   \pgfpathmoveto{\pgfqpointpolar{18}{\starsize}}
   \pgfpathlineto{\pgfqpointpolar{162}{\starsize}}
   \pgfpathlineto{\pgfqpointpolar{306}{\starsize}}
   \pgfpathlineto{\pgfqpointpolar{90}{\starsize}}
   \pgfpathlineto{\pgfqpointpolar{234}{\starsize}}
   \pgfpathclose%
   \pgfusepath{fill}
  }
  
\AtBeginDocument{%
  }

\acmSubmissionID{413}

\newcommand{\argmax}[1]{\underset{#1}{\operatorname{arg}\,\operatorname{max}}\;}

\makeatletter
\newcommand*\bigcdot{\mathpalette\bigcdot@{.5}}
\newcommand*\bigcdot@[2]{\mathbin{\vcenter{\hbox{\scalebox{#2}{$\m@th#1\bullet$}}}}}
\makeatother

\newcommand{\stitle}[1]{\vspace*{0.5em}\noindent{\underline{\bf #1.\/}}}

\newcommand{\V}{\mathcal{V}\xspace}
\newcommand{\G}{\mathcal{G}\xspace}

\newcommand{\N}{\mathcal{N}\xspace}
\newcommand{\EDG}{\mathcal{E}\xspace}

\newcommand{\C}{\mathcal{C}\xspace}

\newcommand{\WM}{\mathbf{W}\xspace}
\newcommand{\AM}{\mathbf{A}\xspace}
\newcommand{\DM}{\mathbf{D}\xspace}
\newcommand{\IM}{\mathbf{I}\xspace}
\newcommand{\SM}{\mathbf{S}\xspace}
\newcommand{\MM}{\mathbf{M}\xspace}
\newcommand{\PM}{\mathbf{P}\xspace}

\newcommand{\YM}{\mathbf{Y}\xspace}
\newcommand{\XM}{\mathbf{X}\xspace}

\newcommand{\HM}{\mathbf{H}\xspace}

\newcommand{\algo}{\texttt{STAD}\xspace}
\newcommand{\GCond}{\texttt{GCond}\xspace}
\newcommand{\GCDM}{\texttt{GCDM}\xspace}
\newcommand{\TCond}{\texttt{TCond}\xspace}
\newcommand{\TCDM}{\texttt{TCDM}\xspace}
\newcommand{\GDEM}{\texttt{GDEM}\xspace}
\newcommand{\ClustGDD}{\texttt{ClustGDD}\xspace}
\newcommand{\SGDD}{\texttt{SGDD}\xspace}
\newcommand{\ClustTDD}{\texttt{ClustTDD}\xspace}
\newcommand{\CGC}{\texttt{CGC}\xspace}
\newcommand{\SFGC}{\texttt{SFGC}\xspace}
\newcommand{\GCSR}{\texttt{GCSR}\xspace}
\newcommand{\TDD}{\texttt{TDD}\xspace}
\newcommand{\DDforTC}{\texttt{DD4TC}\xspace}
\newcommand{\ASD}{\texttt{ASD}\xspace}
\newcommand{\DaLLME}{\texttt{DaLLME}\xspace}
\newcommand{\DiLM}{\texttt{DiLM}\xspace}

\newcommand{\GNNLLM}{\texttt{GNN-LLM}\xspace}
\newcommand{\TAPE}{\texttt{TAPE}\xspace}
\newcommand{\ENGINE}{\texttt{ENGINE}\xspace}
\newcommand{\LLaGA}{\texttt{LLaGA}\xspace}
\newcommand{\CTGL}{\texttt{CTGL}\xspace}

\newcommand{\eat}[1]{}

\def\addlegendimage{\csname pgfplots@addlegendimage\endcsname}

\makeatletter
\newcommand\footnoteref[1]{\protected@xdef\@thefnmark{\ref{#1}}\@footnotemark}
\makeatother

\let\oldnl\nl% Store \nl in \oldnl
\newcommand{\nonl}{\renewcommand{\nl}{\let\nl\oldnl}}% Remove line number for one line

\definecolor{myred}{HTML}{fd7f6f}
\definecolor{myred_new}{HTML}{D8D8D8}
\definecolor{myred_new2}{HTML}{D7191C}
\definecolor{myblue}{HTML}{7eb0d5}
\definecolor{mygreen}{HTML}{b2e061}
\definecolor{mypurple}{HTML}{bd7ebe}
\definecolor{myorange}{HTML}{ffb55a}
\definecolor{myyellow}{HTML}{ffee65}
\definecolor{mypurple2}{HTML}{beb9db}
\definecolor{mypink}{HTML}{fdcce5}
\definecolor{mycyan}{HTML}{8bd3c7}

\definecolor{myblue2}{HTML}{115f9a}
\definecolor{myred2}{HTML}{c23728}

\definecolor{myorange-new}{HTML}{E56400}
\definecolor{mygreen-new}{HTML}{00788B}
\definecolor{greenacc}{HTML}{2ca02c}
\definecolor{Blue}{RGB}{173, 216, 230} % 
\definecolor{DarkBlue}{RGB}{0, 191, 255}
\begin{document}

%%
%% The "title" command has an optional parameter,
%% allowing the author to define a "short title" to be used in page headers.
\title{Semi-Supervised Text-Attributed Graph Distillation}

%%
%% The "author" command and its associated commands are used to define
%% the authors and their affiliations.
%% Of note is the shared affiliation of the first two authors, and the
%% "authornote" and "authornotemark" commands
%% used to denote shared contribution to the research.
\author{Yurui Lai}
\affiliation{%
  \institution{Hong Kong Baptist University}
  \city{Hong Kong}
  \country{China}
}
\email{csyrlai@comp.hkbu.edu.hk}

\author{Samir Moustafa}
\affiliation{%
  \institution{CeMM Research Center for Molecular Medicine of the Austrian Academy of Sciences\\
  Universität Vienna}
  \city{Vienna}
  \country{Austria}}
\email{samir.moustafa@univie.ac.at}

\author{Renchi Yang$^\dagger$}
\affiliation{%
  \institution{Hong Kong Baptist University}
  \city{Hong Kong}
  \country{China}
}
\email{renchi@hkbu.edu.hk}

\author{Tsz Nam Chan}
\affiliation{%
 \institution{Shenzhen University}
 \city{Shenzhen}
 \country{China}}
\email{edisonchan@szu.edu.cn}

%%
%% By default, the full list of authors will be used in the page
%% headers. Often, this list is too long, and will overlap
%% other information printed in the page headers. This command allows
%% the author to define a more concise list
%% of authors' names for this purpose.
\renewcommand{\shortauthors}{Yurui Lai, Samir Moustafa, Renchi Yang, and Tsz Nam Chan.}

%%
%% The abstract is a short summary of the work to be presented in the
%% article.
\begin{abstract}
{\em Text-Attributed Graphs} (TAGs) have emerged as an expressive data model for integrating graph topology with rich textual semantics. Existing representation learning methods over TAGs suffer from severe scalability bottlenecks, particularly together with {\em Large Language Models} (LLMs). 
While data distillation offers a promising data-centric solution, existing methods fail to capture the complex interplay between graph and text modalities, struggle with the label scarcity inherent in semi-supervised settings, and lack the ability to produce the human-readable textual attributes required for downstream LLM-based tasks. 

To address these challenges, we propose \algo{}, a unified semi-supervised framework guided by the {\em Wasserstein Distance} (WSD). 
Grounded in our empirical findings on real TAGs, \algo{} introduces a graph-text collaborative encoding module that utilizes dual-pathway encoders (graph-aware and -free) within a collaborative self-training scheme to harvest reliable pseudo-labels and fuse complementary graph-text features. 
Furthermore, we develop a theoretically grounded WSD-based graph sketching algorithm and a cost-effective LLM text synthesis module, which leverages cluster-based keyword extraction to generate coherent, human-readable summaries for condensed nodes. 
Extensive experiments on benchmark datasets demonstrate that \algo{} achieves a state-of-the-art performance-compression trade-off in terms of both GNN- and LLM-based downstream tasks, enabling effective and efficient TAG learning or analytics.

% Text-Attributed Graphs (TAGs) combines graph structure with semantic text to overcome traditional attributed graph learning limitations. Hybrid GNN-LM/LLM methods have advanced TAG learning.
% However, TAG learning faces scalability bottlenecks. Large real-world TAGs and high LLM integration costs hinder deployment, and single-modality distillation methods are inapplicable due to cross-modality, label scarcity and text readability issues.
% To address these, this paper proposes \textsf{TAGDD} (Text-Attributed Graphs Data Distillation), a unified framework combining self-co-training graph condensation and LLM-based text synthesis. It leverages Wasserstein Distance (WSD) to quantify cross-modality distribution shifts during distillation; after dual-branch GNN self-training, a novel clustering algorithm is adopted for graph condensation, effectively alleviating label scarcity and bimodality fusion issues. Additionally, it uses LLM-driven keyword summarization to generate readable synthetic text for the condensed TAGs.
% Experiments on benchmark TAG datasets demonstrate that TAGDD achieves a state-of-the-art performance-compression trade-off against existing distillation baselines, with an average SOTA advance of [X.X\%], while significantly reducing the training costs of LLM-augmented TAG models and maintaining competitive downstream performance to enable large-scale real-world deployment.
% This work pioneers systematic TAG distillation, providing a reliable metric and efficient framework for large-scale TAG learning.
% \renchi{rewrite after refining methodology}
\end{abstract}

%%
%% The code below is generated by the tool at http://dl.acm.org/ccs.cfm.
%% Please copy and paste the code instead of the example below.
%%
% \begin{CCSXML}
% <ccs2012>
%  <concept>
%   <concept_id>00000000.0000000.0000000</concept_id>
%   <concept_desc>Do Not Use This Code, Generate the Correct Terms for Your Paper</concept_desc>
%   <concept_significance>500</concept_significance>
%  </concept>
%  <concept>
%   <concept_id>00000000.00000000.00000000</concept_id>
%   <concept_desc>Do Not Use This Code, Generate the Correct Terms for Your Paper</concept_desc>
%   <concept_significance>300</concept_significance>
%  </concept>
%  <concept>
%   <concept_id>00000000.00000000.00000000</concept_id>
%   <concept_desc>Do Not Use This Code, Generate the Correct Terms for Your Paper</concept_desc>
%   <concept_significance>100</concept_significance>
%  </concept>
%  <concept>
%   <concept_id>00000000.00000000.00000000</concept_id>
%   <concept_desc>Do Not Use This Code, Generate the Correct Terms for Your Paper</concept_desc>
%   <concept_significance>100</concept_significance>
%  </concept>
% </ccs2012>
% \end{CCSXML}

% \ccsdesc[500]{Do Not Use This Code~Generate the Correct Terms for Your Paper}
% \ccsdesc[300]{Do Not Use This Code~Generate the Correct Terms for Your Paper}
% \ccsdesc{Do Not Use This Code~Generate the Correct Terms for Your Paper}
% \ccsdesc[100]{Do Not Use This Code~Generate the Correct Terms for Your Paper}

%%
%% Keywords. The author(s) should pick words that accurately describe
%% the work being presented. Separate the keywords with commas.
\keywords{Data Distillation; Text-Attributed Graphs; Semi-Supervised Classification; Text Synthesis}
% %% A "teaser" image appears between the author and affiliation
% %% information and the body of the document, and typically spans the
% %% page.
% \begin{teaserfigure}
%   \includegraphics[width=\textwidth]{sampleteaser}
%   \caption{Seattle Mariners at Spring Training, 2010.}
%   \Description{Enjoying the baseball game from the third-base
%   seats. Ichiro Suzuki preparing to bat.}
%   \label{fig:teaser}
% \end{teaserfigure}

% \received{20 February 2007}
% \received[revised]{12 March 2009}
% \received[accepted]{5 June 2009}

%%
%% This command processes the author and affiliation and title
%% information and builds the first part of the formatted document.
\maketitle
\footnotetext{$\dagger$ Corresponding author}

\newcommand\kddavailabilityurl{https://github.com/laiyurui/STAD-Semi-Supervised-Text-Attributed-Graph-Distillation}
\ifdefempty{\kddavailabilityurl}{}{
\begingroup\small\noindent\raggedright\textbf{Resource Availability:}\\
% please change the following context to include multiple artifacts if necessary, including data, models, code, etc.
The source code of this paper has been made publicly available at \url{\kddavailabilityurl}.
\endgroup
}

\section{Introduction}

{\em Text-Attributed Graphs} (TAGs) have emerged as a popular graph data paradigm, integrating rich semantic text with complex relations among textual entities~\cite{yan2023comprehensive, feng2024taglas, chen2024text, zhang2024dtgb}. By bridging the gap between graph data and natural language, TAGs unlock the potential of {\em Large Language Models} (LLMs) for diverse cross-domain applications~\cite{li2023survey, liu2023towards,jin2024large, wang2025can}, from drug discovery~\cite{li2025druglm} to fraud detection~\cite{yang2025flag}, etc.
Through exploiting the synergy between two modalities in TAGs, recent hybrid frameworks combining {\em Graph Neural Networks} (GNNs)~\cite{kipf2016semi, velivckovic2017graph,hamilton2017inductive} with LLMs have achieved remarkable results for learning tasks over TAGs~\cite{zhao2022learning, he2023harnessing, zhang2024text, chen2024text, zhu2024efficient, tang2024graphgpt, chen2024llaga,sun2025large,wullms}. 
Despite the advancements made, these approaches still suffer from a critical scalability bottleneck in practical deployment, as real-world TAGs often scale to hundreds of thousands of nodes and edges~\cite{chen2024text}. 
The integration with LLMs further significantly increases the computational and financial overhead demanded by these methods, which, in turn, severely limits their applicability in large-scale industrial and scientific scenarios.

{\em Data distillation} offers a promising data-centric solution to this scalability challenge in large-scale learning tasks~\cite{sachdeva2023data,lei2023comprehensive}.
% It aims to synthesize a condensed yet informative dataset from the original large-scale data, such that models trained on the synthetic dataset can achieve performance comparable to those trained on the original data. 
Its goal is to synthesize a condensed, small-scale dataset that preserves the training dynamics and information of the original large-scale data.
In recent years, substantial efforts~\cite{jingraph, yang2023does, liu2024graph, lai2025simple} have been made towards {\em graph distillation} (a.k.a. graph condensation) and {\em text distillation}~\cite{li2021data, sucholutsky2021soft, maekawa-etal-2023-dataset, tao2024textual}, where the former seeks to condense the original graphs with nodal attributes into small ones, while the latter focuses on compressing text into concise semantic representations.
However, extending these successes to TAGs still presents unique, non-trivial challenges.
Firstly, the naive combination of graph and text distillation fails to capture the complex interplay between graph and textual modalities~\cite{jingraph, zhou2025text}. Graph distillation methods typically treat text as static numerical features, failing to maintain semantic consistency, while text distillation overlooks structural features.
Second, existing graph condensation methods simply construct continuous features that lack interpretability as node attributes. But for TAGs, it is necessary to generate human-readable textual attributes such that the distilled TAGs remain compatible with downstream LLM-based tasks, e.g., prompt-based learning or explanation generation~\cite{he2023harnessing}.
On top of that, real-world TAGs often present severe label scarcity, where standard distillation techniques that often rely on abundant supervision to align gradients or distributions, falter or even fail.

To overcome the aforementioned challenges, this paper proposes \algo{} (\underline{S}emi-supervised \underline{T}ext-\underline{A}ttributed Graph \underline{D}istillation), a unified framework designed to distill TAGs into compact, human-readable graphs under semi-supervised settings.
Our approach is grounded in our quantitative empirical studies, where we identify that (i) graph-aware models (GNNs) and graph-free models (MLPs) capture complementary information, and simply optimizing one modality is insufficient for effective condensation of TAGs, and (ii) the {\em Wasserstein Distance} (WSD)~\cite{conv-wsd, yang2023does} between the distributions of original and condensed graphs is strongly correlated with downstream performance.
% Guided by this metric, TAGDD introduces two novel components.
Inspired by these observations, we first propose a {\em graph-text collaborative encoding} module, which employs {\em dual-pathway} encoders (graph-aware and -free pathways) within a collaborative self-training scheme as complementary learners, where pseudo-supervision is iteratively generated for better feature fusion, thereby mitigating label scarcity and cross-modality misalignment.
Subsequently, building on these fused features, a theoretically-grounded graph sketching aiming at minimizing the WSD between distribution shifts between original and condensed TAGs is leveraged for condensing both graph topology, attributes, and labels.
Furthermore, we construct textual attributes for distilled graphs through our {\em Keywords-based LLM text synthesis}.
Instead of optimizing abstract embeddings or directly harnessing LLMs for text summarization, which is powerful but costly~~\cite{zhang2025comprehensive,zhang2025systematic}, \algo{} resorts to a three-stage pipeline, where keywords are first extracted from condensed clusters, and LLMs are then cost-effectively harnessed to generate coherent candidate text from these keywords. 
Lastly, a WSD-validation-guided selection strategy is adopted to identify the synthetic texts that best preserve the distributional characteristics of the original data.

In sum, our major contributions in this paper are as follows:
\begin{itemize}[leftmargin=*]
\item We pioneer the systematic exploration of TAG distillation. We propose the use of WSD as a reliable metric to quantify distribution shifts in TAGs, offering a theoretical basis for assessing distillation quality in graph-text contexts.

\item We develop \algo{} framework, which integrates graph-text collaborative encoding to resolve label scarcity and modality misalignment and modality fusion, and WSD-guided graph sketching and keywords-based LLM text synthesis to create high-quality condensed graphs and textual attributes.

\item Our extensive experiments on multiple benchmark TAG datasets demonstrate that \algo{} consistently achieves a state-of-the-art performance-compression trade-off in both GNN- and LLM-based node classification tasks, compared to existing graph or text distillation solutions.
\end{itemize}

\section{Related Work}
\stitle{Graph Distillation}
Graph Distillation (or Graph Condensation) is a data-centric solution to reduce the computational and storage costs of GNNs on large-scale graphs, whose goal is to synthesize compact graphs, maintaining the key information of original graphs for competitive model performance. \GCond~\cite{jingraph} is a simple gradient-matching based method which synthesizes graphs by matching the GNN's gradients on the synthesis and original graphs. \SGDD~\cite{yang2023does} proposes a structure-broadcasting scheme to preserve the original structural information in the gradient matching process. \GCDM~\cite{liu2022graph} and \GDEM~\cite{liu2024graph} are distribution matching-based methods, with \GCDM aligning the receptive field distributions of synthetic and original graphs via MMD, and GDEM focusing on matching the eigenbasis and spectral distributions of the two graphs. \SFGC~\cite{zheng2023structure} and \GCSR~\cite{liu2024graph} are trajectory matching-based distillation methods, which align the long-term learning dynamics of GNNs on synthetic and original graphs. \CGC~\cite{gao2025rethinking} and \ClustGDD~\cite{lai2025simple} are clustering-based methods, with \CGC conducting class partition to derive condensed node features and topology in closed form, and \ClustGDD optimizes FID between synthetic and original graphs via clustering. 

\stitle{Text Distillation}
Text distillation is the process of synthesizing compact text datasets for NLP tasks. \TDD~\cite{sucholutsky2021soft} updates network parameters via gradient descent with distilled samples and soft labels, computes loss on real text, and optimizes the distilled component by loss gradients to capture text semantics, then decodes distilled text using the original word embedding dictionary. \DDforTC~\cite{li2021data} uses the gradients from training on the original texts to be back-propagated to the distillation texts. \ASD~\cite{maekawa-etal-2023-dataset} simultaneously optimizes distilled input embeddings, soft labels, and attention labels using gradient descent, and guides model parameter updates by combining task loss and transformer attention loss. \DiLM~\cite{maekawa2025dilm} uses a GPT-2 model to generate candidate distilled texts, and learn a generation probability via matching the learner's gradients on both original and distilled texts. \DaLLME~\cite{tao2024textual} conducts k-centroid on text embeddings and translates the clustered embedding center back to readable text via a T5~\cite{raffel2020exploring} model finetuned on embedding-text pairs constructed from the original dataset. 

Although many distillation methods exist for graph and text, no dedicated scheme has been designed for TAGs. Single-modal distillation cannot fully exploit the other modality, motivating us to propose a collaborative graph-text distillation scheme for TAGs.

\stitle{TAG Representation Learning}
LMs (including LLMs), combined with traditional GNNs, have leveraged their abilities on TAG representation learning based on rich semantic information of raw texts in TAGs. LMs can serve as TAG encoders by integrating graph topology and texts into the embedding space~\cite{yan2023comprehensive,chen2024text,zhu2024efficient}. \ENGINE~\cite{zhu2024efficient} combines the inter-layer embeddings of LLMs for text encoding. LMs can serve as TAG augmenters, e.g., \TAPE~\cite{he2023harnessing} makes LLMs to directly generate predictions and explanations to enrich the text of nodes. \CTGL~\cite{zhou2025text} quantitatively compares the complementarity between graph structure and textual attributes, serially augments and encodes TAG using GNNs and LMs, and fuses graph structure and text in a contrastive way. LMs can be the predictors to give final predictions of TAG learning tasks~\cite{tang2024graphgpt, chen2024llaga}. \LLaGA\cite{chen2024llaga} puts the serialized graph structure and node texts into an LLM to generate predictions and natural language descriptions. \cite{wullms} gives an extensive survey of different types of LLMs' performance on TAG node classification. However, these methods will suffer from training efficiency problems when the TAGs scale up, which motivates the exploration of TAG distillation. We
In this work, we define LLM4TAG as large language models tailored for TAG learning tasks~\cite{su2025large}, and use this standardized terminology consistently in the experimental section.

\section{Preliminaries}

\subsection{Notations and Problem Formulation}
\label{sec:notations}
\stitle{Notations} Let $\G=(\V,\EDG,\mathcal{S})$ be a {\em Text-Attributed Graph} (TAG), where $\V$ denotes the set of $N$ nodes ($N=|\V|$), $\EDG$ denotes the set of $M$ edges ($M=|\EDG|$), $\mathcal{S} = \{s_i\}_{v_i \in \V}$ is the collection of text sequences for nodes. 
We denote the neighbor set of $v_i$ as $\N(v_i)$ with degree $d(v_i)=|\N(v_i)|$. $\AM\in\{0,1\}^{N\times N}$ is used to represent the adjacency matrix of $\G$ (where $\AM_{i,j}=1$ if $(v_i, v_j)\in\EDG$, otherwise $\AM_{i,j}=0$) and $\DM\in\mathbb{R}^{N\times N}$ symbolizes the diagonal degree matrix. 
The normalized adjacency matrix is $\tilde{\AM}=\DM^{-\frac{1}{2}}\AM\DM^{-\frac{1}{2}} $. 
Let $\XM\in\mathbb{R}^{N\times D}$ be the text embeddings of nodes, wherein $\XM_i$ (the $i$-th row) denotes the vectorized encoding of $s_i$ for node $v_i\in \V$ via a text encoder $g:\mathcal{S}\to\XM$. By default, we adopt a pretrained Sentence-BERT (SBERT)~\cite{reimers2019sentence} as $g$ throughout this paper.
Let $\mathcal{Y}$ be the task-specific class set, and $K=|\mathcal{Y}|$ is the number of node classes.
% Let $K$ be the number of node classes.
We denote by $\YM\in \{0,1\}^{N\times K}$ the ground-truth label matrix of $\G$
% and $\mathcal{Y}=\{(v_i,y_i)\}_{i=1}^N$ be the node labels
% class labels of nodes in $\G$, 
where $\YM_{i,k}=1$ if $y_k\in \mathcal{Y}$ ($1\le k\le K$) is $v_i$'s ground-truth class label and $0$ otherwise. 
Under semi-supervised settings, only a small set of nodes $\V_\text{tr}\subseteq \mathcal{V}$ ($|\V_\text{tr}|\ll N$) are labeled, referred to as training samples. 

\stitle{Problem Formulation}
The overarching goal of {\em Text-Attributed Graph Distillation} (TAGD) is to distill a compact, high-fidelity compressed TAG $\G' = (\V', \EDG', \mathcal{S}')$ from the original larger $\G$, where $\V^\prime$
($N^\prime=|\V^\prime| \ll N$) and $\EDG^\prime$ ($M^\prime=|\EDG^\prime| \ll M$) stand for the node and edge sets of $\G^\prime$, respectively, and $\mathcal{S}^\prime = \{s^\prime_i\}_{v^\prime_i \in \V^\prime}$ represents human-readable text sequences for condensed nodes therein.
Let $\AM'$ and $\YM'$ be the adjacency matrix and node labels of $\G^\prime$, respectively.
Formally, the objective of the TAGD problem can be formulated as:
\begin{equation}\label{eq:TAGD-obj}
\begin{gathered}
\min_{\G'} \, {f}\left(\mathcal{M}_{\boldsymbol{\phi}_{\G'}}(\AM, \mathcal{S}), \YM\right) \\
\text{s.t.} \, \boldsymbol{\phi}_{\G'} = \arg\min_{\boldsymbol{\phi}} f\left(
            \mathcal{M}_{\boldsymbol{\phi}}(\AM', \mathcal{S}'), \YM'
        \right), \\
\end{gathered}
\end{equation}
where $\mathcal{M}_{\boldsymbol{\phi}}$ denotes a general model parameterized by $\boldsymbol{\phi}$, and $f(\cdot)$ is the task-specific criteria. 
In this paper, we consider the semi-supervised node classification setting and the {\em distillation ratio} is defined as $r=\frac{N^\prime}{|\V_\text{tr}|}\leq 1$.

\subsection{2-Wasserstein Distance (WSD)}
The \textit{2-Wasserstein Distance} (WSD) in \textit{optimal transport}~\cite{solomon2015convolutional} is a classic distance function used to measure shift between two probability distributions $\mathcal{P}$ and $\mathcal{Q}$:
  \begin{equation*}
  \textsf{WSD}(\mathcal{P}, \mathcal{Q}) = \inf_{\gamma \in \Gamma(\mathcal{P}, \mathcal{Q})} \int_{\mathcal{X} \times \mathcal{X}} \|\mathbf{p} - \mathbf{q}\| \, \text{d}\gamma(\mathbf{p}, \mathbf{q})
  \end{equation*}
where $ \Gamma(\mathcal{P}, \mathcal{Q}) $ is the set of joint distributions with marginals $ \mathcal{P} $ and $ \mathcal{Q} $, $ \mathcal{X} $ is the feature space, and $ \|\mathbf{p} - \mathbf{q}\| $ is the Euclidean distance. \SGDD~\cite{yang2023does} uses WSD as an alternative to the Laplacian Energy Distribution shift coefficient for graph structure optimization. \ClustGDD~\cite{lai2025simple} finds a correlation between {\em Fréchet Inception Distance} (FID) and accuracy of GNNs on original and synthetic graphs, where FID between two multivariate Gaussian distributions \(N(\mu_\mathcal{P}, \Sigma_\mathcal{P})\) and \(N(\mu_\mathcal{Q}, \Sigma_\mathcal{Q})\), given by
\[
\text{FID}(\mathcal{P},\mathcal{Q}) = \|\mu_\mathcal{P} - \mu_\mathcal{Q}\|_2^2 + \operatorname{Tr}\big(\Sigma_\mathcal{P} + \Sigma_\mathcal{Q} - 2(\Sigma_\mathcal{P} \Sigma_\mathcal{Q})^{1/2}\big),
\]
corresponds to a special case of WSD when the underlying distributions are multivariate Gaussians. Next, we will compute WSD between original and synthetic textual attributes, graph structures under various distillation ratios, and comprehensively investigate the correlations between WSD and the model performance.

\begin{figure}[tb]  
  \centering 
  \includegraphics[width=\columnwidth]{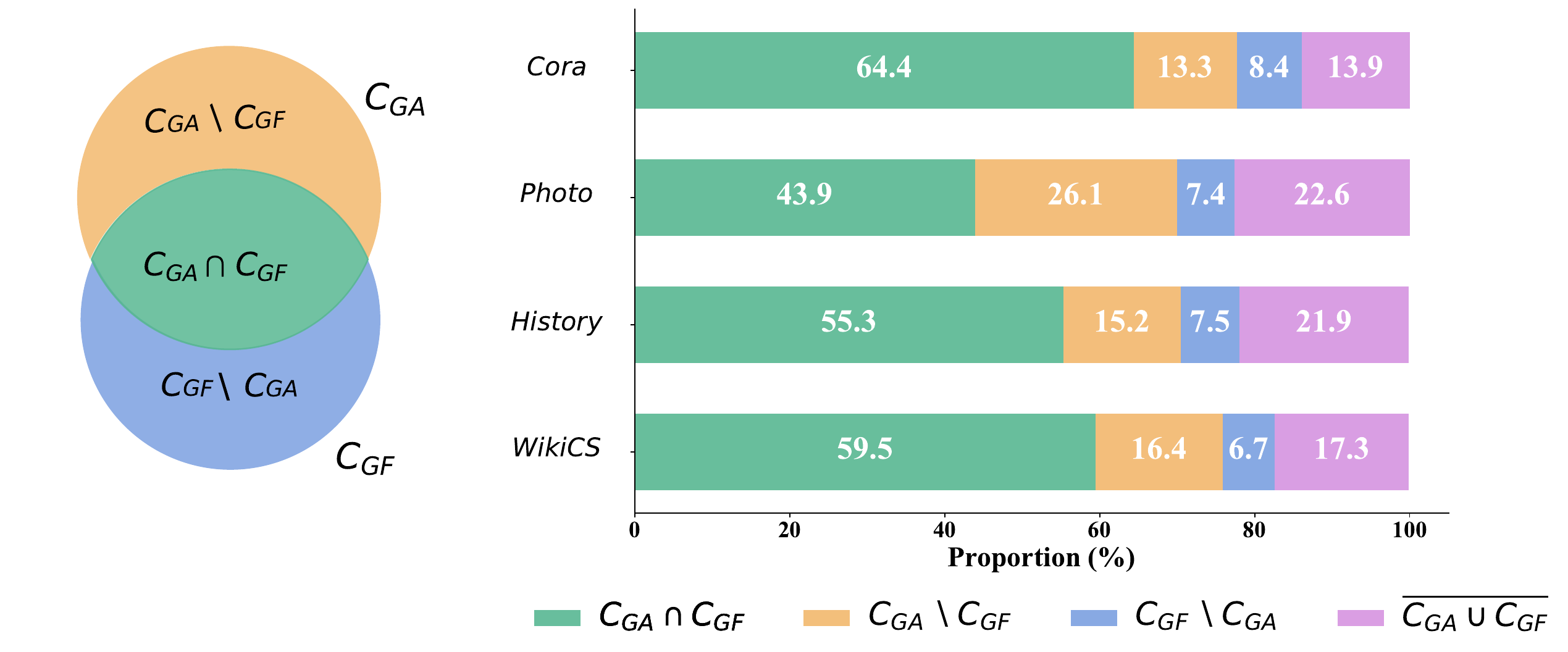} 
  \vspace{-4ex}
  \caption{Complementary coverage of GA vs.\ GF on TAGs.}
  \label{fig:single_col}  
  \vspace{-2ex}
\end{figure}

\begin{figure*}[tb]  
  \centering 
  \includegraphics[width=0.9\linewidth]{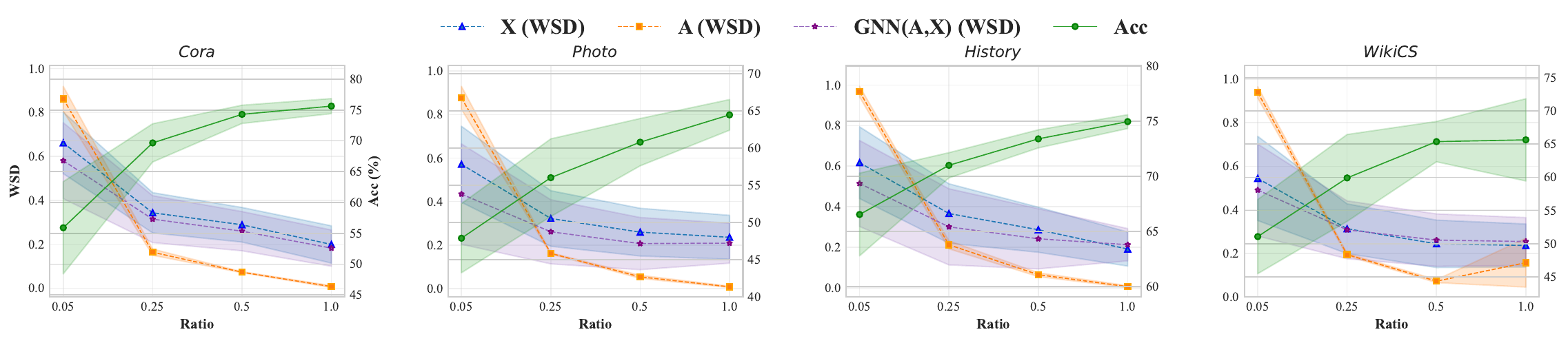} 
  \vspace{-3ex}
  \caption{WSD and node classification accuracy on condensed TAGs vs.\ distillation ratio.}
  \label{fig:acc-wsd}
  \vspace{-2ex}
\end{figure*}

\subsection{Preliminary Studies}
\label{sec:observations_objectives}

We empirically investigate the TAGD task over four real datasets (\textit{Cora}, \textit{Photo}, \textit{History}, and \textit{WikiCS}) pertaining to two aspects before entering into the algorithmic design of our proposed \algo{}.

\stitle{\bf Graph-Aware vs. Graph-Free}\label{sec:GA-GF}
Since both graph structures and textual attributes encode rich semantics in TAGs, a natural question is their roles in the downstream task, i.e., semi-supervised node classification. 
To this end, we compare the {\em graph-aware} (GA) model, i.e., GNNs using  $\XM$ and $\AM$ for node classification, against the {\em graph-free} (GF) model, i.e., MLPs using pure text attribute of $\mathcal{S}$.
Let $\mathcal{C}_\text{GA}$ and $\mathcal{C}_\text{GF}$ be the sets of node samples whose labels are correctly predicted by the GA and GF models, respectively. Accordingly, $\overline{\mathcal{C}_\text{GA}\cup \mathcal{C}_\text{GF}}$ stands for the complementary set containing node samples whose labels cannot be correctly predicted by both models.

Fig.~\ref{fig:single_col} reports the proportions of different sets on the four tested datasets. It can be observed that although $\mathcal{C}_\text{GA}$ and $\mathcal{C}_\text{GF}$ have a big overlap ($43.9\%$-$64.4\%$), their differences are still considerable.
Firstly, this observation underscores the indispensability of both structural and textual information and their complementary strengths in TAGs. Additionally, the $\mathcal{C}_\text{GF}\setminus\mathcal{C}_\text{GA}$ part implies that using GNNs with $\XM$ in the GA model fails to fully capture the textual semantics in $\mathcal{S}$, necessitating new feature encoders.

\stitle{\bf Relation between the WSD and Accuracy}\label{sec:WSD-acc}
Next, we empirically analyze the distributional shifts between attribute matrices of the original TAG $\G$ and the condensed TAG $\G^\prime$, their topological structures (i.e., adjacency matrices), their fused representations via GNNs, as well as their relations to node classification performance. 
Fig.~\ref{fig:acc-wsd} displays the WSD values between $\XM$ and $\XM^\prime$, $\AM$ and ${\AM}^\prime$, $\textsf{GNN}(\AM,\XM)$ and $\textsf{GNN}({\AM}^\prime,\XM^\prime)$, and node classification accuracies (Acc) when varying the distillation ratios, i.e., $r\in [0.05,0.25,0.5,1.0]$ on the four real datasets ({\em Cora}, {\em Photo}, {\em History}, and {\em WikiCS}).

The empirical results reveal a key trend. $\textsf{WSD}({\XM}, {\XM}^\prime)$, $\textsf{WSD}({\AM}, {\AM}^\prime)$, and $\textsf{WSD}(\textsf{GNN}(\AM,\XM),\textsf{GNN}({\AM}^\prime,\XM^\prime))$, increase constantly as $r$ is decreased (i.e., their distributional shifts enlarge), while the classification performance keeps going down.
This observation suggests that maximizing the node classification performance over the TAG $\G$ in Eq.~\eqref{eq:TAGD-obj} can be equivalently transformed into the problem of reducing the discrepancy between $\G$ and $\G^\prime$ in terms of both graph structures and attributes, e.g., minimizing $\textsf{WSD}(\XM,\XM^\prime)$ and $\textsf{WSD}(\AM,\AM^\prime)$ jointly. To reduce bias, we compute the mean and std of the results from different baseline distillation methods.

\section{Methodology}
This section presents our \algo{} method for TAGD under semi-supervised settings.
As illustrated in Fig.~\ref{fig:method_overview}, \algo{} includes three key modules for feature encoding in TAGs, graph sketching, and synthesis of textual attributes.
We begin with elucidating our {\em graph-text collaborative encoding} model for generating node feature vectors $\HM$ in \S\ref{sec:feat-encode}, followed by constructing the sketching matrix $\SM$ for graph condensation in \S\ref{sec:sketch}. \S\ref{sec:text-synthesis} delineates our cost-effective approach for synthesizing textual attributes of nodes in $\G^\prime$ with LLMs.
For the interest of space, we defer the complete algorithm and provide the cost analysis to Appendix~\ref{sec:algorithm_complexity}.

\subsection{Graph-Text Collaborative Encoding}\label{sec:feat-encode}

As per our analysis in \S\ref{sec:GA-GF}, we propose to adopt {\em dual-pathway encoders} and leverage {\em collaborative self-training} to unearth the complementary information underlying the graph and text features for enhanced feature encoding.

\stitle{\bf Dual-Pathway Encoders}
Since the GNN encoder will lose distinct features underlying $\mathcal{S}$, as revealed in \S\ref{sec:GA-GF}, \algo{} includes two paths of encoders: a GA (or graph) encoder via GCNs, and a GF (or text) encoder via an MLP layer.
Specifically,
\begin{equation}
\HM^{\text{\tiny GA}}=\textsf{GCN}(\AM, \XM),\ \HM^{\text{\tiny GF}}=\textsf{MLP}(\XM),
\end{equation}
whose $i$-th rows $\HM_i^{\text{\tiny GA}}\in \mathbb{R}^{h}$ and $\HM_i^{\text{\tiny GF}} \in \mathbb{R}^{h}$ signify the feature vectors of node $v_i$ output by the dual-pathway encoders.
Subsequently, \algo{} calculates attention weights by
\begin{equation}
\alpha^{\text{\tiny GA}}_i = \sigma\left(\WM_{\text{att}} \cdot \HM_i^{\text{\tiny GA}} + \mathbf{b}_{\text{att}}\right),\ \alpha^{\text{\tiny GF}}_i = 1-\alpha^{\text{\tiny GA}}_i,
\end{equation}
where $\WM_{\text{att}} \in \mathbb{R}^{h}$ and $\mathbf{b}_{\text{att}} \in \mathbb{R}$ are learnable parameters, and $\sigma(\cdot)$ denotes the sigmoid function. 
Accordingly, the fused predicted {\em label distributions} (or soft labels) of each node $v_i$ can be derived via a weighted element-wise summation ($\odot$ is the element-wise multiplication) of the feature vectors from both encoders:
\begin{equation}\label{eq:P-def}
    \PM_i = \textsf{softmax}(\alpha^{\text{\tiny GA}}_i \odot \HM_i^{\text{\tiny GA}} + \alpha^{\text{\tiny GF}}_i \odot \HM_i^{\text{\tiny GF}}).
\end{equation}
Inspired by the adaptive mechanism in \cite{Bo_Wang_Shi_Shen_2021, NEURIPS2022_092359ce}, \algo{} learns attention weights to automatically fuse graph and text features. In particular, the soft labels derived from the GA and GF encoders can be represented as follows:
\begin{equation*}
\PM^{\text{\tiny GA}}_i = \textsf{softmax}(\HM_i^{\text{\tiny GA}}),\ \PM^{\text{\tiny GF}}_i = \textsf{softmax}(\HM_i^{\text{\tiny GF}}).
\end{equation*}

\begin{figure*}[tb]  
  \centering 
  \includegraphics[width=0.95\linewidth]{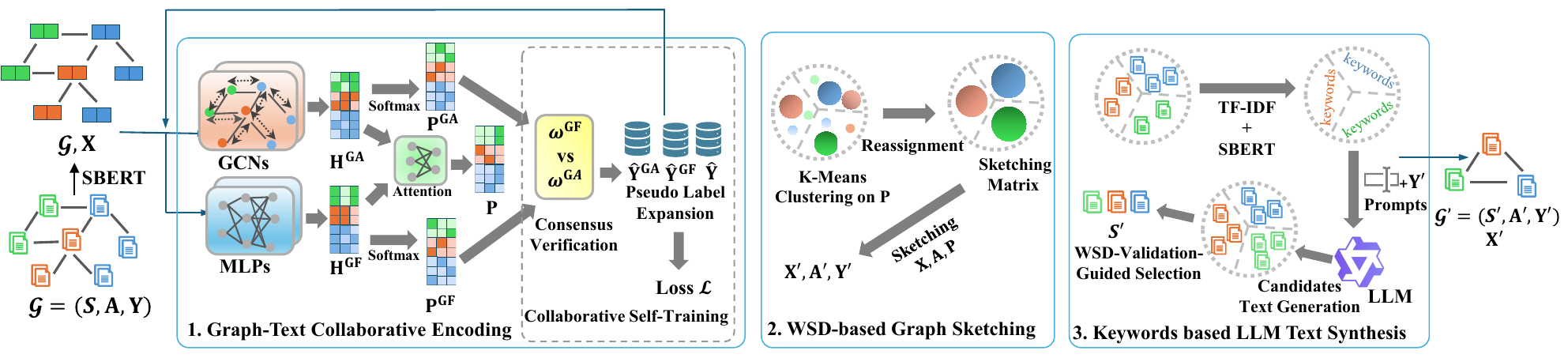} 
  \vspace{-2ex}
  \caption{The Overview of \algo{}}
  \label{fig:method_overview}  
  \vspace{-1ex}
\end{figure*}

\stitle{\bf Collaborative Self-Training} 
To fully leverage scarce labeled data, we introduce \textit{collaborative self-training} (CoST) that treats GA and GF encoders as complementary learners, as they capture distinct features predictive of different labels.
Let $\hat{\YM}$ be the label matrix consisting of both true labels and hard pseudo-labels obtained based on label distribution $\PM$, and $\hat{\YM}^{\text{\tiny GA}}$ and $\hat{\YM}^{\text{\tiny GF}}$ are counterparts from $\PM^{\text{\tiny GA}}$ and $\PM^{\text{\tiny GF}}$, respectively.
CoST trains the model with such predictions and pseudo-labels by jointly optimizing the following composite loss:
\begin{equation}
\begin{aligned}
\mathcal{L} &= \textsf{CE}( \PM, \hat{\YM}) + \textsf{CE}(\PM^{\text{\tiny GF}},\hat{\YM}^{\text{\tiny GF} } )+ \textsf{CE}(\PM^{\text{\tiny GA}},\hat{\YM}^{\text{\tiny GA}}),
\end{aligned}
\end{equation}
where $\textsf{CE}(\cdot,\cdot)$ denotes the cross-entropy loss.

Next, we elaborate on the constructions of $\hat{\YM}$, $\hat{\YM}^{\text{\tiny GA}}$, and $\hat{\YM}^{\text{\tiny GF}}$.
Unlike conventional self-training~\cite{lee2013pseudo, li2018deeper}, where the pseudo-labels from a single model are prone to error accumulation, CoST employs a {\em consensus verification} between GA and GF modules to filter uncertain pseudo-labels.
For ease of exposition, we represent by $\mathcal{U}$, $\mathcal{U}^{\text{\tiny GA}}$, and $\mathcal{U}^\text{\tiny GF}$ the labeled node sets of $\hat{\YM}$, $\hat{\YM}^{\text{\tiny GA}}$, and $\hat{\YM}^{\text{\tiny GF}}$, respectively.
Initially, $\mathcal{U}=\mathcal{U}^{\text{\tiny GA}}=\mathcal{U}^{\text{\tiny GF}}=\V_\text{tr}$.
We iteratively expand these labeled node sets by identifying pseudo-labeled data from $\V\setminus \mathcal{U}$.

Specifically, in each epoch, we obtain a set $\mathcal{R}$ of candidate nodes for pseudo-labeling based on the GA pathway $\PM^{\text{\tiny GA}}$ by selecting the $\rho$ proportion of unlabeled nodes with the highest {\em confidence scores} where the confidence score of $v_i$ is denoted by $\max_{1\le k\le K}\PM^{\text{\tiny GA}}_{i,k}$.
Notice that the value of $\rho$ is incremented by 0.1 per epoch until reaching a maximum of 1.0 as in~\cite{karisani2023neural}.
Subsequently, the candidate set $\mathcal{R}$ is partitioned into a {\em consensus} set $\mathcal{R}^{\text{\tiny CS}}$ and a {\em disagreement} set $\mathcal{R}^{\text{\tiny DS}}$ by the predictions in the GF pathway $\PM^{\text{\tiny GF}}$:
\begin{equation*}
\mathcal{R}^{\text{\tiny CS}}=\{v_i\in \mathcal{R}|y^{\text{\tiny GA}}_i=y^{\text{\tiny GF}}_i\},\ \mathcal{R}^{\text{\tiny DS}}=\{v_i\in \mathcal{R}|y^{\text{\tiny GA}}_i\neq y^{\text{\tiny GF}}_i\}=\mathcal{R}\setminus \mathcal{R}^{\text{\tiny CS}}
\end{equation*}
where $y^{\text{\tiny GA}}_i=\argmax{1\le k\le K}\PM^{\text{\tiny GA}}_{i,k}$ and $y^{\text{\tiny GF}}_i=\argmax{1\le k\le K}\PM^{\text{\tiny GF}}_{i,k}$ represent the predicted labels for $v_i$ by both pathways.
Intuitively, in the consensus set $\mathcal{R}^{\text{\tiny CS}}$, both pathways produce the same label predictions, and thus, they can be directly used as pseudo-labels, i.e.,
\begin{equation*}
\mathcal{U} = \mathcal{U}\cup \mathcal{R}^{\text{\tiny CS}}, \mathcal{U}^\text{\tiny GA} = \mathcal{U}^\text{\tiny GA}\cup \mathcal{R}^{\text{\tiny CS}}, \text{and}\ \mathcal{U}^\text{\tiny GF} = \mathcal{U}^\text{\tiny GF}\cup \mathcal{R}^{\text{\tiny CS}}.
\end{equation*}
As for the disagreement set $\mathcal{R}^{\text{\tiny DS}}$, we further divide it into two subsets $\mathcal{R}^{\text{\tiny DS}}_{\text{\tiny GA}}$ and $\mathcal{R}^{\text{\tiny DS}}_{\text{\tiny GF}}$, where each node $v_i$ in the former set has a higher confidence score $\omega^{\text{\tiny GA}}=\max_{1\le k\le K}\PM^{\text{\tiny GA}}_{i,k}$ from the GA pathway compared to the GF one ($\omega^{\text{\tiny GA}}=\max_{1\le k\le K}\PM^{\text{\tiny GA}}_{i,k}$), while the opposite is true for the latter. The pseudo-labels of the nodes in $\mathcal{R}^{\text{\tiny DS}}_{\text{\tiny GA}}$ and $\mathcal{R}^{\text{\tiny DS}}_{\text{\tiny GF}}$ are thus determined by $\PM^{\text{\tiny GA}}$ and $\PM^{\text{\tiny GF}}$, respectively. Accordingly, the labeled node sets are expanded as follows:
\begin{equation*}
\mathcal{U}=\mathcal{U}\cup \mathcal{R}^{\text{\tiny DS}}_{\text{\tiny GA}} \cup \mathcal{R}^{\text{\tiny DS}}_{\text{\tiny GF}}, \mathcal{U}^\text{\tiny GA} = \mathcal{U}^\text{\tiny GA}\cup \mathcal{R}^{\text{\tiny DS}}_{\text{\tiny GA}},\ \text{and}\ \mathcal{U}^\text{\tiny GF} = \mathcal{U}^\text{\tiny GF}\cup \mathcal{R}^{\text{\tiny DS}}_{\text{\tiny GF}}.
\end{equation*}

\subsection{\bf WSD-based Graph Sketching}\label{sec:sketch}
After the obtainment of soft labels $\PM$, the next step in \algo{} is to construct a {\em sparse} sketching matrix $\SM \in \mathbb{R}^{N \times N'}$ such that the sketches of $\AM$, $\XM$, and label matrix of $\G$ can be used as the adjacency matrix $\AM'\in \mathbb{R}^{N'\times N'}$, attribute matrix $\XM' \in \mathbb{R}^{N'\times D}$, and label matrix $\YM'\in\mathbb{R}^{N'}$ of the condensed TAG $\G'$, respectively. More precisely,
\begin{equation}\label{eq:sketch}
\AM' = \SM^\top \AM \SM,\ \XM' = \SM^\top \XM.
\end{equation}
and for each node $v_i\in \G^\prime$, its synthetic label is constructed as follows:
\begin{equation}\label{eq:label-assign}
% \YM'_{i,k} = \arg\max_{\text{row}} (\SM^\top \PM)
\YM'_{i,k} = \begin{cases} 1 & \text{if } k = \argmax{1\le k\le K}{(\SM^\top \PM)}_{i,:} \\ 0 & \text{otherwise} \end{cases}.
\end{equation}

Akin to the count-sketch~\cite{clarkson2017low}, we impose a {\em one-hot constraint} over $\SM$ for sparsity, wherein each row in $\SM$ contains a single non-zero entry indicating the hashing or sampling of the node into a bucket.

\stitle{Sketching Objective} Inspired by our empirical observations in \S\ref{sec:observations_objectives}, a desired $\G^\prime$ obtained by Eq.~\eqref{eq:sketch} should have small WSD distances from $\G$ in terms of textual attributes and graph topology, leading to the following objectives:
\begin{equation}\label{eq:WSD-obj}
% \begin{gathered}
\min_{\SM}{\textsf{WSD}(f(\AM), f(\AM^\prime))},\ \text{and}\ \min_{\SM}{\textsf{WSD}(\XM, \XM^\prime)},
% \end{gathered}
\end{equation}
where $f(\cdot)$ stands for an embedding function that maps $\AM_i \in \mathbb{R}^{N}$ and $\AM'_i \in \mathbb{R}^{N'}$ to vectors of the same dimensions.
Since Fig.~\ref{fig:acc-wsd} shows that $\textsf{WSD}(\textsf{GNN}(\AM,\XM), \textsf{GNN}(\AM^\prime,\XM^\prime))$ exhibits the same trends as that of the single modality, and Fig.~\ref{fig:single_col} uncovers that $\textsf{GNN}(\AM,\XM)$ fails to fully capture the features in $\textsf{MLP}(\XM)$, we thus transform Eq.~\eqref{eq:WSD-obj} into the joint minimization of $\textsf{WSD}(\textsf{GNN}(\AM,\XM), \textsf{GNN}(\AM^\prime,\XM^\prime))$ and $\textsf{WSD}(\textsf{MLP}(\XM), \textsf{MLP}(\XM^\prime))$.

\begin{lemma}\label{lem:WSD-clust}
For any matrix $\MM$, the minimization of $\textsf{\textnormal{WSD}}(\MM,\SM^\top\MM)$ is equivalent to $\min_{\SM}{\frac{1}{N} \sum_{k=1}^{N'} \sum_{i=1}^{N} \mathbb{1}_{\SM_{k,i}\neq 0}\cdot\left\| \MM_i - \sum_{j=1}^{N} \SM_{k,j} \MM_j \right\|_2^2}$.
\end{lemma}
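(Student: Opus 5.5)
We read $\textsf{WSD}(\MM,\SM^\top\MM)$ as the Wasserstein distance between two empirical distributions. The first is $\mathcal{P}=\frac1N\sum_{i=1}^N\delta_{\MM_i}$, uniform over the rows of $\MM$. The second is the distribution $\mathcal{Q}$ supported on the $N'$ condensed rows $\mathbf{c}_k=\sum_{j}\SM_{k,j}\MM_j$ (the rows of $\SM^\top\MM$), where $\mathbf{c}_k$ carries mass equal to the fraction of original nodes assigned to bucket $k$. Here the indexing $\SM_{k,i}$ follows the statement, i.e.\ the transpose of the $N\times N'$ convention. Under the one-hot constraint, the nonzero pattern of $\SM$ defines an assignment $\pi:\{1,\dots,N\}\to\{1,\dots,N'\}$, with $\mathbb{1}_{\SM_{k,i}\neq 0}=1$ iff $\pi(i)=k$. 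We work with the squared 2-Wasserstein cost; the paper's section title calls it the 2-Wasserstein distance, and the squared form matches the squared norm in the claimed objective. We also restrict to nonnegative weights with buckets normalized so that each $\mathbf{c}_k$ is a convex combination of its members. This is the normalization under which $\SM^\top\MM$ is a meaningful condensation; under it $\mathbf{c}_k$ is the bucket mean, and I take the statement to intend this setting.

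\textbf{Step 1 (upper bound).} Given $\SM$, the assignment $\pi$ induces a coupling $\gamma_\pi=\frac1N\sum_i\delta_{(\MM_i,\mathbf{c}_{\pi(i)})}$. Its first marginal is $\mathcal{P}$, and its second marginal is exactly $\mathcal{Q}$ by the definition of the bucket masses. Therefore
\[
\textsf{WSD}^2(\MM,\SM^\top\MM)\le \frac1N\sum_{i}\|\MM_i-\mathbf{c}_{\pi(i)}\|_2^2=\frac1N\sum_{k}\sum_i\mathbb{1}_{\SM_{k,i}\neq0}\Bigl\|\MM_i-\sum_j\SM_{k,j}\MM_j\Bigr\|_2^2,
\]
which is the objective in the statement.

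\textbf{Step 2 (matching the minima).} Minimizing over $\SM$ means minimizing jointly over the partition and the representatives. For a fixed partition, the right-hand side is minimized by taking $\mathbf{c}_k$ to be the bucket mean, i.e.\ $\SM_{k,j}=1/|\pi^{-1}(k)|$, so its minimum over $\SM$ is the $k$-means objective. Conversely, for any $N'$-point target measure $\mathcal{Q}$, the semi-discrete optimal transport from $\mathcal{P}$ to $\mathcal{Q}$ can be realized, after relaxing to a deterministic plan, by sending each $\MM_i$ to some support point. This gives $\min_{\mathcal{Q}}\textsf{WSD}^2(\mathcal{P},\mathcal{Q})=\min_\pi\frac1N\sum_i\|\MM_i-\mathbf{c}_{\pi(i)}\|^2$, the classical identity that optimal quantization equals $k$-means. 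Combining this with Step 1, the two minimization problems have the same optimal value and the same optimal $\SM$: any optimal $\SM$ for the clustering objective attains the WSD lower bound, and vice versa.

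\textbf{Main obstacle.} The delicate part of Step 2 is showing that the optimal coupling need not split mass. For uniform $\mathcal{P}$ whose target weights are multiples of $1/N$, the transport LP is an assignment problem with an integral optimal vertex (Birkhoff--von Neumann). In general, the optimal $N'$-point quantizer is attained at Voronoi/centroid configurations, where nearest-center assignment is deterministic. I would handle this by citing the quantization--$k$-means equivalence rather than reproving it. The result to state is equivalence of the minimizers, not pointwise equality of the two objectives for arbitrary $\SM$, which can fail.
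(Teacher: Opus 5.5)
Your main argument is correct, but it takes a different route from the paper's. Write $J(\SM)$ for the objective in the statement and $\mathrm{OPT}$ for the $k$-means optimum.

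\textbf{How the two routes differ.} The paper fixes the mean weights $\SM_{k,i}=1/|\C_k|$ and adds a hypothesis supplied by K-Means: the partition is Voronoi with respect to its own centroids. It then uses the pointwise bound $\sum_k\gamma_{ik}\|\MM_i-\boldsymbol\mu_k\|_2^2\ge\frac1N\|\MM_i-\boldsymbol\mu_{k(i)}\|_2^2$, valid for every coupling $\gamma$, to show the assignment coupling is optimal. Hence $\textsf{WSD}^2(\MM,\SM^\top\MM)=J(\SM)$ holds \emph{exactly} for such $\SM$.

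You drop the Voronoi hypothesis and sandwich the value instead. The paper's feasibility step, read as an upper bound, gives $\textsf{WSD}^2\le J$ for every $\SM$, and quantization gives $\textsf{WSD}^2\ge\mathrm{OPT}$.

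\textbf{What each buys.} The paper's identity is exact at any Lloyd fixed point, including merely local optima, which is what the algorithm returns. It does not cover non-Voronoi sketches, such as those the greedy reassignment can produce. Your version concerns only global minima, but it makes $J$ an upper bound on $\textsf{WSD}^2$ for every sketch. You are right that pointwise equality fails off Voronoi partitions: for rows $0,1,10,11$ with buckets $\{0,10\},\{1,11\}$ one gets $J=25$ but $\textsf{WSD}^2=20.5$.

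\textbf{Your ``main obstacle'' is not one.} Take the paper's Step~4 inequality with the nearest atom in place of the assigned one. It gives $\sum_k\gamma_{ik}\|\MM_i-\mathbf c_k\|_2^2\ge\frac1N\min_k\|\MM_i-\mathbf c_k\|_2^2$ for any coupling, whether or not it splits mass. The sum $\frac1N\sum_i\min_k\|\MM_i-\mathbf c_k\|_2^2$ is the cost of the nearest-atom partition with non-centroid centers, so it is at least $\mathrm{OPT}$. No integrality or Birkhoff--von Neumann argument is needed.

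\textbf{One assertion is wrong as you set things up.} ``And vice versa'' (every WSD-minimizer minimizes $J$) does not follow from your two bounds. In your class of arbitrary convex weights, which your Step~2 explicitly optimizes over, it is false.

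Counterexample: take rows $0,1,10,11$, $N'=2$, buckets $\{0,10\}$ and $\{1,11\}$ with weights $(0.95,0.05)$ and $(0.05,0.95)$. The atoms are $0.5$ and $10.5$, each with mass $\frac12$, so $\textsf{WSD}^2=\frac14=\mathrm{OPT}$. Yet $J=45.25$.

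You have two ways to fix this:
\begin{enumerate}
\item Claim only that $J$-minimizers minimize WSD.
\item Restrict to mean weights, as the paper does. Then the converse holds, with one more step.
\end{enumerate}

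For the second option, let $\gamma$ be an optimal coupling of cost $\mathrm{OPT}$, set $w_k=|\pi^{-1}(k)|/N$, and define barycenters $\tilde{\mathbf c}_k=w_k^{-1}\sum_i\gamma_{ik}\MM_i$.
\begin{itemize}
\item Moving the atoms to $\tilde{\mathbf c}_k$ yields a coupling to an $N'$-atom measure of cost $\mathrm{OPT}-\sum_kw_k\|\tilde{\mathbf c}_k-\mathbf c_k\|_2^2$. The quantization bound forbids this from falling below $\mathrm{OPT}$, so $\tilde{\mathbf c}_k=\mathbf c_k$.
\item For fixed marginals, the cost of a coupling is $\frac1N\sum_i\|\MM_i\|_2^2-2\sum_kw_k\langle\tilde{\mathbf c}_k,\mathbf c_k\rangle+\sum_kw_k\|\mathbf c_k\|_2^2$, so it depends on the coupling only through its barycenters.
\item With mean weights, the assignment coupling's barycenters are the bucket means, which equal $\mathbf c_k$.
\end{itemize}
Together these give $J(\SM)=\textsf{WSD}^2=\mathrm{OPT}$.
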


Since $\SM$ has a single non-zero entry per row, the sketching matrix $\SM$ can be represented as a set of disjoint clusters $\{\C_1,\C_2,\ldots,\C_{N^\prime}\}$ where $\mathcal{C}_k=\{v_i\in \V|\ \SM_{k,i}>0\}$.
Accordingly, $\mathbb{1}_{\SM_{k,i}\neq 0}$ can be seen as a cluster indicator and $\sum_{j=1}^{N} \SM_{k,j} \MM_j=\sum_{v_j\in \mathcal{C}_k}{\SM_{k,j} \MM_j}$ can be perceived as the weighted centroid of the cluster $\mathcal{C}_k$. The objective in Lemma~\ref{lem:WSD-clust} can be rewritten as
\begin{small}
\begin{equation*}
\min_{\{\C_1,\C_2,\ldots,\C_{N^\prime}\}}{\frac{1}{N} \sum_{k=1}^{N'} \sum_{v_i\in \mathcal{C}_k}\left\| \MM_i - \sum_{v_j\in \mathcal{C}_k} \SM_{k,j} \MM_j \right\|_2^2}.
\end{equation*}
\end{small}
In other words, Lemma~\ref{lem:WSD-clust}\footnote{The proof can be found in Appendix \ref{appendix:theory}} implies that the construction of sketching matrix $\SM$ towards optimizing our above WSD-based objective can be equivalently perceived as the clustering over feature vectors $\textsf{GCN}(\AM, \XM)=\HM^{\text{\tiny GA}}$ and $\textsf{MLP}(\XM)=\HM^{\text{\tiny GF}}$.

According to Eq.~\eqref{eq:P-def}, $\PM$ adaptively combines $\HM^{\text{\tiny GA}}$ and $\HM^{\text{\tiny GF}}$ via attention weights. As such, instead of joint clustering over both features, we simply cluster vectors in $\PM$, leading to the following optimization objective for deriving the sketching matrix $\SM$:
\begin{small}
\begin{equation}\label{eq:logit_kmeans}
\min_{\{\C_1,\C_2,\ldots,\C_{N^\prime}\}}{\frac{1}{N} \sum_{k=1}^{N'} \sum_{v_i\in \mathcal{C}_k}\left\| \PM_i - \sum_{v_j\in \mathcal{C}_k} \SM_{k,j} \PM_j \right\|_2^2}.
\end{equation}
\end{small}

\stitle{Construction of $\SM$}
The objective function in Eq.~\eqref{eq:logit_kmeans} is a {\em within-cluster sum of squares}, if we set the weight $\SM_{k,j}$ of each node $v_j$ to $\frac{1}{|\mathcal{C}_k|}$, which can be solved by the well-known {K-Means} algorithm~\cite{lloyd1982least}. 
However, the class labels of nodes are obtained by {\em hardmax discretization} of softmax distributions $\PM$ or $\SM^\top\PM$ (see Eq.~\eqref{eq:label-assign}). The clustering of softmax distributions is likely to group nodes with distinct labels into the same cluster. 
To alleviate this issue, we additionally employ a {\em greedy reassignment} scheme as a post-refinement of $\SM$. 
More concretely, we first evaluate the affinity between each class label $y_k\in \mathcal{Y}$ ($1\le k\le K$) and each cluster $\mathcal{C}_j\in \{\C_1,\C_2,\ldots,\C_{N^\prime}\}$ by computing the fraction of the total population of label $y_k$ that falls into $\mathcal{C}_j$: 
\begin{small} 
\begin{equation*} 
\frac{|\{v_i\in \mathcal{C}_j|\ \YM_{i,k}=1\}|}{|\{v_i\in \V|\ \YM_{i,k}=1\}|}. 
\end{equation*} 
\end{small} 
We then pick, for each class, the class-cluster pair with the highest affinity value, and then select the remaining class-cluster pairs with the highest affinity values among all the $K\times N^\prime$ pairs until obtaining top-$N^{\prime}$ pairs in total. The nodes in each of these classes will be finally assigned to the corresponding cluster. 
For each node $v_i$ without a final assignment in the above step, its cluster is updated to the one whose centroid has the minimum distance to $v_i$'s label distribution among the selected clusters of the same predicted class.

\subsection{Keywords-based Text Synthesis with LLMs}\label{sec:text-synthesis}

It is necessary to generate corresponding readable texts of each node in $\G'$, which can be utilized in downstream LLM-based tasks and make it easier for human understanding. Notice that the sparse sketching matrix $\SM$ obtained in the preceding section essentially partitions the nodes in $\G$ into disjoint clusters $\{\C_1,\C_2,\ldots,\C_{N^\prime}\}$, each of which corresponds to a node in $\G^\prime$. 
Intuitively, the textual attributes of each node $v^{\prime}_i\in\G^\prime$ should be a summarization of the text sequences of nodes in its corresponding cluster $\C_i$.  Traditional text distillation maps the synthetic embeddings to discrete synthetic texts~\cite{morris2023text} via {\em vector to text} (V2T), which may involve poor cross-architecture generalization, high tuning overhead, and low interpretability. 

To address these limitations, we directly invoke an LLM to summarize the original texts corresponding to nodes within each cluster, obtaining the synthetic texts without intermediate embedding-to-text mapping. However, directly processing clustered texts via LLMs is both highly time-consuming and financially costly. To cost-effectively leverage the massive knowledge and remarkable generative abilities of LLMs for the summarization fulfilling the above goals, \algo{} resorts to a three-phase approach.

\stitle{\bf Cluster-based Keyword Extraction}
Instead of simply assembling all the text sequences in each cluster $\C_i$ as the input to LLMs, which not only comprises substantial noisy and redundant information but also entails a high query cost, our solution is to first sift out the keywords from the text sequences $\mathcal{S}_i = \{s_1, s_2, \dots, s_{|\C_i|}\}$ of each cluster $\C_i$.
Towards this end, we define the {\em intra-cluster} TF-IDF for each word $w$ as follows:
\begin{equation}
\text{TF-IDF}(w, \mathcal{S}_i) = \text{TF}(w, \mathcal{S}_i) \times \text{IDF}(w, \mathcal{S}_i),
\end{equation}
where $\text{TF}(w, \mathcal{S}_i)$ and $\text{IDF}(w, \mathcal{S}_i)$ are formulated by 
\begin{small}
\begin{equation*}
\begin{gathered}
\text{TF}(w, \mathcal{S}_i) = \frac{1}{|\C_i|} \sum_{k=1}^{|\C_i|} \frac{\#(w, s_k)}{\underset{w' \in s_k}{\max} \#(w', s_k)},\ \text{IDF}(w, \mathcal{S}_i) = \log\left( \frac{|\C_i|}{1 + \text{df}(w, \mathcal{S}_i)} \right).
\end{gathered}
\end{equation*}
\end{small}
Particularly, $\#(w, s_j)$ is the frequency of word $w$ in text $s_j$, and $\text{df}(w, \mathcal{S}_i)$ denotes the number of texts in $\mathcal{S}_i$ containing word $w$.

Accordingly, we next pick the top-$2\xi$ words with the highest intra-cluster TF-IDF values in $\mathcal{S}_i$, followed by an additional filtering step to prune the $\xi$ words with the smallest Euclidean distances (SBERT embedding vectors) to the corresponding attribute vector $\XM^\prime_i$ of $\C_i$. Finally, we obtain a list of $\xi$ distinct words $\mathcal{W}_i = \{w_{i,1}, w_{i,2}, \dots, w_{i,\xi}\}$ as the keywords for cluster $\C_i$ or condensed node $v^{\prime}_i$ in $\G^\prime$. Notably, our keyword extraction jointly considers words’ statistical importance (quantified by intra-cluster TF-IDF) and semantic relevance (retained via embedding distance). Using compact keywords instead of full sentences allows LLMs to capture more entities within the same input length. Keywords as input can help with interpretable text synthesis and enable flexible output by adjusting keyword order or composition.

\stitle{\bf Candidate Text Generation with LLMs}
Based on keywords $\mathcal{W}_i$, we prompt an \text{LLM} to generate human-readable summary text $s^\prime_{i,j}$ as a candidate text for $v^{\prime}_i\in \G^\prime$:
\begin{equation}
s^\prime_{i,j} = \textsf{LLM}(\mathcal{P},\mathcal{W}_i,y^\prime_i),
\end{equation}
where $\mathcal{P}$ signifies the prompt specifying the task instruction and a formal description of $\G$'s domain, task, and attribute characteristics, and $y^\prime_i$ is the semantic label of $v^\prime_i$ derived in Eq.~\eqref{eq:sketch}.
In particular, \algo{} prompts the LLMs to generate a set of $q$ (typically $q=3$) candidate text $\{s^\prime_{i,1}, s^\prime_{i,2}, \dots, s^\prime_{i,q}\}$ for $v^\prime_i$.
% For the interest of space, we refer interested readers to Appendix~\ref{app:prompts} for the detailed prompt templates.

\stitle{WSD-Validation-guided Approximate Selection}
Since each node in $\G^\prime$ has $q$ candidate text sequences, it is still prohibitively expensive to find the best synthetic textual attributes due to the $O(q^{N^\prime})$ possible combinations.
As a workaround, \algo{} samples $n_s$ combinations of the candidate text for nodes in $\G^\prime$ uniformly at random, yielding $n_s$ candidate graphs with distinct text sequences $\{s^{(j)\prime}_{1}, s^{(j)\prime}_{2}, \dots, s^{(j)\prime}_{N^\prime}\}_{j=1}^{n_s}$. 
Based thereon, each set of text sequences is converted into an attribute matrix $\XM^{(j)\prime}$ via a pre-trained text encoder (i.e., SBERT). We select the top-$k$  (often $k=5$) candidates with the smallest WSD to $\XM$, evaluate their downstream task performance on the original dataset, and choose the one achieving the highest validation accuracy as the final textual attributes for $\G^\prime$.
% According to our empirical analysis in Appendix \ref{exp:hyperpara}, using a small value for $n_s$ (often $n_s=100$) is sufficient.

\begin{table}[t]
\centering
% \footnotesize
\small 
\caption{Statistics of the seven TAG datasets.}\label{table:dataset}
\vspace{-3ex}
\renewcommand{\arraystretch}{0.9}
% \resizebox{\linewidth}{!}{
\begin{tabular}{lcccc}
\toprule
\textbf{Name}  & \textbf{\#Nodes} & \textbf{\#Edges} & \textbf{Domain} & \textbf{\#Classes} \\
\midrule
\textit{Cora}                     & 2708             & 10556            & CS Citation           & 7                  \\
\textit{CiteSeer}                       & 3186             & 8450             & CS Citation          & 6                  \\
\textit{DBLP  }                         & 14376            & 431326           & CS Citation           & 4                  \\
\textit{Computers }                     & 87229            & 1256548          & E-commerce            & 10                 \\
\textit{Photo}                          & 48362            & 873782           & E-commerce            & 12                 \\
\textit{History}                        & 41551            & 503180           & E-commerce            & 12                 \\
\textit{WikiCS }                        & 11701            & 431726           & Knowledge             & 10                 \\
\bottomrule
\end{tabular}
% }
\vspace{-2ex}
\end{table}

\begin{table*}[htbp]
\renewcommand{\arraystretch}{0.95}
  \centering
  % \footnotesize
  \caption{Node classification accuracy over distillation ratios. Best and runner-up per cell, in green and light green, respectively. Using Qwen3-max for text synthesis.}
  \vspace{-2ex}
  \label{tab:distillation_final}
  \resizebox{\linewidth}{!}{
\addtolength{\tabcolsep}{-0.3em}
    \begin{tabular}{lc| *{12}{c} |cc}
    \toprule
    \multirow{2}{*}{\textbf{Dataset}} & \multirow{2}{*}{\textbf{$r$}} & \multicolumn{12}{c}{\textbf{Distillation Methods}} & \multicolumn{2}{|c}{\textbf{Full Methods}}\\
    % \cmidrule(lr){3-14} \cmidrule(lr){16-17} 
    \cline{3-14} \cline{15-16}
    & & \makecell{\text{\GCond}} & \makecell{\text{\GCDM}} & \makecell{\text{\GDEM}} & \makecell{\text{\ClustGDD}} & \makecell{\text{\CGC}} & \makecell{\text{\TCond}} & \makecell{\text{\TCDM}} & \makecell{\text{\ASD}} & \makecell{\text{\ClustTDD}} & \makecell{\text{\DaLLME}} & \makecell{\text{\algo{}($\XM'$)}} &\makecell{\text{\algo{}($\mathcal{S}'$)}} & \text{GCN} & \text{MLP} \\
    \midrule
    \multirow{3}[2]{*}{\rotatebox[origin=c]{60}{\em Cora}} 
    & 1.0    & 72.8$\pm$2.0 & 76.2$\pm$1.6  & \cellcolor{teal!30}78.7$\pm$0.8 & \cellcolor{teal!10}77.9$\pm$1.4 & 74.9$\pm$0.5 & 66.1$\pm$2.6 & 72.9$\pm$2.1  & 73.8$\pm$1.5 & 75.1$\pm$1.4 & 66.8$\pm$2.0 & \cellcolor{teal!50}{79.5$\pm$1.8} & 76.9$\pm$0.5 &  \multirow{3}[2]{*}{77.7$\pm$2.0} & \multirow{3}[2]{*}{72.8$\pm$ 1.0} \\
    & 0.5    & 71.5$\pm$2.3 & 73.7$\pm$2.0 & 72.9$\pm$2.0 & \cellcolor{teal!30}77.7$\pm$1.7 & 74.1$\pm$0.8 & 63.1$\pm$2.7 & 61.3$\pm$1.8 & 71.7$\pm$ 1.9 & 73.9$\pm$1.4 & 64.8$\pm$2.3 & \cellcolor{teal!50}{79.1$\pm$1.1} & \cellcolor{teal!10}{76.1$\pm$0.8} &  & \\
    & 0.25   & 63.5$\pm$5.6  & 68.9$\pm$2.8 & 76.3$\pm$1.1 & \cellcolor{teal!30}76.6$\pm$1.3 & 72.6$\pm$1.1& 63.1$\pm$1.8 & 66.0$\pm$1.5 & 66.9$\pm$1.2 & 67.8$\pm$3.5 & 60.0$\pm$1.1 & \cellcolor{teal!50}{79.1$\pm$0.9} & \cellcolor{teal!10}{76.5$\pm$0.5} &  \\
     & 0.05   & 45.8$\pm$5.2 & 47.7$\pm$6.4 & 73.1$\pm$1.6 & \cellcolor{teal!10}74.2$\pm$2.4  &\cellcolor{teal!30} 74.9$\pm$0.9 & 55.1$\pm$2.3 & 48.0$\pm$5.3 & 46.9$\pm$7.0 & 69.2$\pm$4.4 & 52.1$\pm$1.6 & \cellcolor{teal!50}77.5$\pm$0.6 & 73.9$\pm$0.7 &  &  \\
    \midrule
    \multirow{3}[2]{*}{\rotatebox[origin=c]{60}{\em Citeseer}} 
    & 1.0    & 75.1$\pm$0.6 & 75.4$\pm$1.3 & 75.7$\pm$0.8  & \cellcolor{teal!10}76.1$\pm$0.6 & 73.7$\pm$0.9 & 66.0$\pm$3.3 & 72.5$\pm$0.9  & 72.6$\pm$1.0 & 72.4$\pm$0.8 & 70.3$\pm$3.4 & \cellcolor{teal!50}76.3$\pm$0.4 & \cellcolor{teal!50}76.3$\pm$0.6  &  \multirow{3}[2]{*}{74.0$\pm$2.1} & \multirow{3}[2]{*}{71.9$\pm$1.5} \\
    & 0.5    & 69.2$\pm$2.2 & 74.8$\pm$0.9 & 73.5$\pm$1.7 & \cellcolor{teal!10}73.9$\pm$1.5 & 73.5$\pm$1.5 & 64.3$\pm$0.5 & 70.4$\pm$1.5 & 70.6$\pm$1.3 & 72.7$\pm$0.8 & 67.5$\pm$2.3 &  \cellcolor{teal!50}76.3$\pm$1.1 & \cellcolor{teal!30}75.4$\pm$0.6 & &  \\
    & 0.25   & 66.6$\pm$9.0   & 70.2$\pm$1.1 & \cellcolor{teal!30}75.6$\pm$0.6 & 72.9$\pm$2.1 & 73.1$\pm$0.9 &58.4$\pm$1.3 & 64.7$\pm$3.0 & 67.5$\pm$3.7 & 68.2$\pm$0.7 &  63.1$\pm$3.5 & \cellcolor{teal!50}76.0$\pm$1.6 & \cellcolor{teal!10}75.3$\pm$1.5 &  & \\
    & 0.05   & 48.3$\pm$7.6  & 51.5$\pm$9.0 & \cellcolor{teal!50}76.8$\pm$0.5 & 71.7$\pm$3.0 & 75.1$\pm$0.5 & 58.3$\pm$2.7 & 55.7$\pm$6.2 & 50.6$\pm$3.2 & 66.9$\pm$0.5 & 63.0$\pm$3.5 & \cellcolor{teal!30}75.9$\pm$0.5 & \cellcolor{teal!10}73.2$\pm$0.9  & &  \\
    \midrule
    \multirow{3}[2]{*}{\rotatebox[origin=c]{60}{\em DBLP}} 
    & 1.0    & 63.0$\pm$5.8  & 77.4$\pm$0.6 & 76.7$\pm$0.3 & \cellcolor{teal!30}78.4$\pm$0.4 & 76.2$\pm$0.8 & 61.9$\pm$1.2 & 68.8$\pm$1.1 & 68.7$\pm$0.5 &  69.4$\pm$1.1 & 67.6$\pm$7.3 & \cellcolor{teal!10}78.3$\pm$0.3  & \cellcolor{teal!50}78.5$\pm$0.4 & \multirow{3}[2]{*}{77.9$\pm$0.5} & \multirow{3}[2]{*}{68.0$\pm$0.8} \\
    & 0.5    & 67.5$\pm$4.2  & 75.3$\pm$1.0 & 74.3$\pm$2.0  & \cellcolor{teal!10}77.8$\pm$0.4 & 63.1$\pm$3.6 & 58.1$\pm$2.2 & 65.7$\pm$1.4 & 64.9$\pm$1.5 &  67.9$\pm$2.1 & 67.1$\pm$2.2 & \cellcolor{teal!50}78.4$\pm$0.3  & \cellcolor{teal!30}78.1$\pm$0.3  & &  \\
    & 0.25   & 61.9$\pm$12.1 & 71.6$\pm$3.7 & \cellcolor{teal!10}76.3$\pm$0.5 & 75.3$\pm$1.5 & 52.5$\pm$10.1 & 56.8$\pm$2.7 &60.5$\pm$2.2  & 61.2$\pm$2.3 & 67.0$\pm$2.2 & 65.3$\pm$1.3 & \cellcolor{teal!50}78.3$\pm$0.4 & \cellcolor{teal!30}77.8$\pm$0.5  &  & \\
    & 0.05   & 70.3$\pm$4.0 & 55.7$\pm$5.7 & \cellcolor{teal!10}75.9$\pm$0.7 & 74.6$\pm$2.8 & 74.4$\pm$4.1 & 52.1$\pm$2.8 &46.0$\pm$4.7  & 43.0$\pm$3.6 & 67.0$\pm$2.4 & 54.4$\pm$0.6 & \cellcolor{teal!50}77.2$\pm$0.2 & \cellcolor{teal!30}76.4$\pm$0.8 &  & \\
    \midrule
    \multirow{3}[2]{*}{\rotatebox[origin=c]{60}{\em Computers}} 
    & 1.0    & 55.8$\pm$10.9  & 66.7$\pm$0.7 & 61.7$\pm$1.5  & \cellcolor{teal!10}68.2$\pm$3.1 & 63.0$\pm$1.0 & 38.6$\pm$1.5 & 49.8$\pm$2.0  & 49.4$\pm$2.2 & 52.0$\pm$1.7 & 48.3$\pm$1.5 & \cellcolor{teal!50}73.5$\pm$1.5 & \cellcolor{teal!30}70.2$\pm$1.3 & \multirow{3}[2]{*}{72.2$\pm$1.9} & \multirow{3}[2]{*}{48.9$\pm$1.3} \\
    & 0.5    & 57.4$\pm$3.7 & 62.6$\pm$3.2 & 62.7$\pm$0.9 & \cellcolor{teal!10}68.1$\pm$2.8 & 55.1$\pm$1.4 & 37.1$\pm$1.4 & 43.4$\pm$2.8  & 44.2$\pm$3.0 & 50.5$\pm$3.3 & 45.5$\pm$2.7 & \cellcolor{teal!50}74.2$\pm$1.0 & \cellcolor{teal!30}70.3$\pm$1.1  &  & \\
    & 0.25   & 48.0$\pm$6.3 & 53.9$\pm$5.1 & 54.2$\pm$4.1 & \cellcolor{teal!10}61.9$\pm$3.5 & 50.5$\pm$3.5 & 37.9$\pm$1.7 & 39.2$\pm$2.1  & 39.8$\pm$1.0 & 42.9$\pm$3.0 & 41.4$\pm$3.2 & \cellcolor{teal!50}71.5$\pm$2.8 & \cellcolor{teal!30}68.2$\pm$0.8  &  & \\
    & 0.05   & 39.9$\pm$7.7  & 35.8$\pm$3.9 & 60.7$\pm$4.1 & \cellcolor{teal!10}61.3$\pm$5.8  & 57.9$\pm$0.7 & 36.4$\pm$1.8 &30.0$\pm$2.5   & 30.0$\pm$1.8 &  43.3$\pm$6.0 & 27.7$\pm$2.2 & \cellcolor{teal!50}65.4$\pm$3.7 & \cellcolor{teal!30}62.6$\pm$2.5  & &  \\
    \midrule
    \multirow{3}[2]{*}{\rotatebox[origin=c]{60}{\em Photo}} 
    & 1.0    & 61.7$\pm$2.0  & 62.2$\pm$2.6 & 61.4$\pm$1.6 & \cellcolor{teal!30}69.5$\pm$1.8 & 65.7$\pm$1.2 & 44.3$\pm$2.2 & 52.5$\pm$2.6 & 53.5$\pm$2.4 & 52.0$\pm$5.7 & 50.5$\pm$3.4 & \cellcolor{teal!50}71.2$\pm$0.9 & \cellcolor{teal!10}68.7$\pm$1.2 & \multirow{3}[2]{*}{70.0$\pm$0.9} & \multirow{3}[2]{*}{51.3$\pm$2.5} \\
    & 0.5    & 54.8$\pm$7.4 & 59.5$\pm$1.4 & 61.5$\pm$2.6 & \cellcolor{teal!30}70.1$\pm$3.0 & 50.2$\pm$5.1 & 44.5$\pm$2.5 & 49.8$\pm$3.7 & 51.1$\pm$1.4 & 49.7$\pm$8.3 & 46.5$\pm$3.1 & \cellcolor{teal!50}70.3$\pm$1.1 & \cellcolor{teal!10}68.3$\pm$0.8 &  & \\
    & 0.25   & 49.3$\pm$5.8 & 50.0$\pm$3.1 & 54.6$\pm$1.0 & \cellcolor{teal!10}68.8$\pm$2.2 & 46.4$\pm$2.7 & 45.6$\pm$2.7 & 46.2$\pm$5.1  & 48.3$\pm$3.4  & 42.0$\pm$7.8 & 47.0$\pm$1.7 & \cellcolor{teal!50}69.9$\pm$1.8 & \cellcolor{teal!30}68.9$\pm$1.3 &  & \\
    & 0.05   & 44.0$\pm$4.4 & 40.6$\pm$2.3 & 59.3$\pm$3.0 & 59.1$\pm$3.7  & \cellcolor{teal!50}65.7$\pm$0.6 & 45.3$\pm$1.7 & 41.7$\pm$0.4 & 41.4$\pm$0.9 & 46.8$\pm$5.6 & 40.5$\pm$3.4 & \cellcolor{teal!10}63.4$\pm$1.7 & \cellcolor{teal!30}64.6$\pm$1.4 &  & \\
    \midrule
    \multirow{3}[2]{*}{\rotatebox[origin=c]{60}{\em History}} 
    & 1.0    & 76.5$\pm$1.0 & 74.2$\pm$1.7 & 70.9$\pm$2.1 & 74.2$\pm$3.0 & \cellcolor{teal!10}74.5$\pm$0.9 & 72.1$\pm$1.0 & 69.4$\pm$2.1  & 72.9$\pm$0.7 & 74.6$\pm$1.3 & 74.2$\pm$2.2 & \cellcolor{teal!30}76.8$\pm$1.6 & \cellcolor{teal!50}78.1$\pm$1.1  & \multirow{3}[2]{*}{70.6$\pm$3.6} & \multirow{3}[2]{*}{62.8$\pm$3.7} \\
    & 0.5    & 74.6$\pm$2.8 & 71.4$\pm$1.8 & 73.5$\pm$1.2 & 74.2$\pm$3.1 & 71.9$\pm$1.6 & 69.4$\pm$2.8 & 65.6$\pm$2.8 & 68.2$\pm$3.7 & 72.1$\pm$1.5 & \cellcolor{teal!10}75.4$\pm$2.7 & \cellcolor{teal!50}78.6$\pm$0.8 & \cellcolor{teal!30}77.8$\pm$0.8 &  & \\
    & 0.25   & 72.2$\pm$1.4  & 68.2$\pm$4.8 & 66.6$\pm$2.5 & 72.6$\pm$1.8 & 70.4$\pm$1.6 & 67.3$\pm$1.0 & 62.0$\pm$5.0 & 63.4$\pm$1.8 & 70.4$\pm$2.6 & \cellcolor{teal!10}74.2$\pm$2.3 & \cellcolor{teal!30}77.4$\pm$2.8 & \cellcolor{teal!50}78.3$\pm$1.3  &  & \\
    & 0.05   & 65.9$\pm$6.0  & 58.9$\pm$3.5 & 71.9$\pm$1.4 & \cellcolor{teal!30}74.8$\pm$2.8 & 54.6$\pm$3.5 & 52.0$\pm$4.5 & 58.0$\pm$3.1 & 58.0$\pm$2.3 & 70.2$\pm$3.8 & 64.8$\pm$5.8 & \cellcolor{teal!50}75.9$\pm$1.7 & \cellcolor{teal!10}73.4$\pm$2.8  &  & \\
    \midrule
    \multirow{3}[2]{*}{\rotatebox[origin=c]{60}{\em WikiCS}} 
    & 1.0    & 50.7$\pm$16.2  & \cellcolor{teal!30}75.4$\pm$1.6 & 71.7$\pm$0.6 & 70.8$\pm$2.1 & 69.7$\pm$1.5 & 55.0$\pm$1.3 & 66.8$\pm$1.1 & 66.2$\pm$1.3 & 66.3$\pm$1.6 & 62.5$\pm$2.7 & \cellcolor{teal!50}77.6$\pm$0.4 & \cellcolor{teal!10}74.9$\pm$0.9 & \multirow{3}[2]{*}{75.9$\pm$1.2} & \multirow{3}[2]{*}{66.3$\pm$1.5} \\
    & 0.5    & 58.4$\pm$4.5  &  71.1$\pm$2.5  & \cellcolor{teal!10}71.4$\pm$1.1 & 66.6$\pm$3.0 & 57.1$\pm$1.6 & 52.6$\pm$2.7 &  61.3$\pm$2.8 & 61.6$\pm$1.3 & 63.5$\pm$1.6 & 59.8$\pm$2.7 & \cellcolor{teal!50}76.8$\pm$0.7 & \cellcolor{teal!30}73.7$\pm$0.9 & \\
    & 0.25    & 43.9$\pm$4.6  & 68.6$\pm$3.3 & \cellcolor{teal!10}68.8$\pm$1.0 & 67.2$\pm$1.5 & 47.3$\pm$3.3 & 50.6$\pm$1.7 &  55.6$\pm$2.3  & 53.9$\pm$1.3 & 57.9$\pm$1.8 & 54.0$\pm$3.3 & \cellcolor{teal!50}76.5$\pm$0.9 & \cellcolor{teal!30}74.0$\pm$0.8 &  \\
    & 0.05    & 49.3$\pm$3.1  & 40.2$\pm$3.4  & 52.0$\pm$3.4 & 63.7$\pm$3.9 & \cellcolor{teal!10}71.4$\pm$2.1 & 47.0$\pm$1.2 & 39.0$\pm$4.1 & 38.0$\pm$4.4 &  60.2$\pm$2.2  & 48.5$\pm$3.4 & \cellcolor{teal!50}76.6$\pm$0.3 & \cellcolor{teal!30}71.9$\pm$2.5 & \\
    \bottomrule
    \end{tabular}
}
\end{table*}

\section{Experiments}
We conduct experiments to address the following questions. First, how does the \algo{} perform on semi-supervised TAGD tasks? Second, how can $\G'$ generated by \algo{} help LL4TAG methods? Third, how do the modules in \algo{} contribute to the overall performance? 

% Our code is available at \url{https://github.com/laiyurui/STAD-Semi-Supervised-Text-Attributed-Graph-Distillation.git}. 

\subsection{Experimental Settings}
\stitle{Dataset}
Table \ref{table:dataset} summarizes TAG datasets: citation networks {\em Cora}, {\em Citeseer}~\cite{yang2016revisiting}, {\em DBLP}~\cite{ji2010graph}; E-commerce co-view networks {\em Computers}, {\em Photo}, {\em History}~\cite{yan2023comprehensive}; and {\em WikiCS}~\cite{yan2023comprehensive} (Wikipedia CS pages). 

\stitle{Baselines}
We compare with the following recent baselines: 
\begin{itemize}[leftmargin=*]
\item Graph distillation: \GCond~\cite{jingraph}, \GCDM~\cite{liu2022graph}, \GDEM~\cite{liu2024graph}, \ClustGDD~\cite{lai2025simple}, \CGC~\cite{gao2025rethinking};
\item Text distillation: \ASD~\cite{maekawa-etal-2023-dataset}, \DaLLME~\cite{tao2024textual}. We also include topology-removed variants \TCond, \TCDM, \ClustTDD (replacing $\AM,\AM'$ with identity matrices $\IM,\IM'$, respectively).
\end{itemize}
 
\algo{} has two variants: \algo{}($\XM'$) and \algo{}($\mathcal{S}'$). \algo{}($\XM'$) trains a GCN on $\XM'$, $\AM'$, $\YM'$ and evaluates it on $\XM$, $\AM$, $\YM$. For \algo{}($\mathcal{S}'$), $\mathcal{S}'$ is remapped to $\XM''\in \mathbb{R}^{N'\times D}$ via SBERT, then a GCN is trained on $\XM''$, $\AM'$, $\YM'$ and evaluated on $\XM$, $\AM$, $\YM$.

\stitle{Implementation Details}
We use five random splits with 20 nodes per class (or all if fewer exist). GCN and MLP are two-layer (hidden 64, lr 0.01, weight decay $0.0005$). Text synthesis uses Qwen3-max (API); LLM4TAG training/inference uses Qwen3-1.7B~\cite{yang2025qwen3}.
% We put our hyper-parameter analysis in Appendix~\ref{exp:hyperpara}, graph visualization in Appendix~\ref{exp:vis}, memory compression ratio in Appendix~\ref{appendix:stat}, prompts and output samples in Appendix~\ref{app:prompts}. 

\subsection{Node Classification Performance Evaluation}

Table~\ref{tab:distillation_final} validates our design: WSD-guided dual-path encoding and keyword-based text synthesis yield strong, stable compression. Graph distillation generally outperforms text-only distillation, consistent with full models (GCN vs.\ MLP). With $r=0.05$, \GCond and \GCDM fall below \TCond and \TCDM, likely due to over-smoothing on low-quality synthetic topology. Gradient-based methods (\GCond, \TCond) are unstable as $r$ decreases, e.g., \GCond on {\em Cora} drops from 72.8\%$\pm$2.0 to 45.8\%$\pm$5.2. Clustering-based methods (\ClustGDD, \CGC, \ClustTDD, \DaLLME) are more competitive and stable, and \GDEM achieves good performance via spectral alignment. \algo{}($\XM'$) and \algo{}($\mathcal{S}'$) achieve overall best accuracy. On {\em WikiCS} at $r{=}0.5$, \algo{}($\XM'$) reaches 76.8\%, outperforming full GCN (75.9\%) and the next-best distillation (71.4\%). \algo{} on distilled data sometimes exceeds full-data training, which we attribute to effective extraction of discriminative information and noise reduction. 
\algo{}($\XM'$) usually beats \algo{}($\mathcal{S}'$) because $\XM'$ preserves semantics via direct aggregation; $\XM''$ from $\mathcal{S}'$ incurs information loss from SBERT's 256-token limit. \algo{}($\mathcal{S}'$) achieves relative better performance on TAGs with large scale. This observation arises because clusters with richer text provide more context and statistical evidence, yielding more representative, informative keywords. 
% For distillation time comparison, please refer to Appendix~\ref{appendix:real-time}.

% syn node classification with LLM4TAG Methods 
\begin{table}[htbp]
\renewcommand{\arraystretch}{0.9}  
\centering
% \footnotesize
\small
\caption{Node classification accuracy (mean~\% $\pm$ std) with semi-supervised LLM4TAG methods with $r=0.5$.}
\vspace{-2ex}
\label{tab:distillation_llm4g}
\addtolength{\tabcolsep}{-0.3em}  
% \resizebox{\linewidth}{!}{
\begin{tabular}{l c *{3}{c}} 
\toprule
\multirow{2}{*}{\textbf{Dataset}} & \multirow{2}{*}{\textbf{\makecell{Distillation\\ Method}}} & \multicolumn{3}{c}{\textbf{Baselines}} \\
\cmidrule(lr){3-5}  
& & \makecell{\GNNLLM} & \makecell{\text{\ENGINE}} & \makecell{\text{\TAPE}} \\
\midrule
\multirow{7}{*}{\em Cora}       
& \text{Full Methods}          & 77.8$\pm$1.6 & 78.8$\pm$1.1 & 77.5$\pm$1.0 \\
& \text{\GCond+V2T}         & 29.4$\pm$2.4 & 35.0$\pm$7.0 & 33.5$\pm$6.0 \\
& \text{\GCDM+V2T}          & 45.1$\pm$7.9 & 47.1$\pm$6.3 & 58.7$\pm$6.8 \\
& \text{\GDEM+V2T}          & 27.4$\pm$5.9 & 25.9$\pm$6.0 & 10.4$\pm$3.3 \\
& \text{\ClustGDD+V2T}      &  \cellcolor{teal!30} 55.9$\pm$5.6 &  \cellcolor{teal!10} 41.8$\pm$6.3 &  \cellcolor{teal!30} 65.0$\pm$2.0 \\
& \text{\DaLLME}            &  \cellcolor{teal!10}53.0$\pm$5.1 &  \cellcolor{teal!30} 57.9$\pm$5.0 &  \cellcolor{teal!10}61.5$\pm$2.1 \\
& \text{\algo{}}               &   \cellcolor{teal!50} 70.9$\pm$4.2 &   \cellcolor{teal!50} 73.9$\pm$1.6 &  \cellcolor{teal!50} 70.4$\pm$2.6 \\

\midrule
\multirow{7}{*}{\em History}    
& \text{Full Methods}          & 65.4$\pm$4.8 & 69.1$\pm$2.4 & 69.0$\pm$4.9 \\
& \text{\GCond+V2T}         & 40.6$\pm$21.4& 48.0$\pm$17.9& 22.1$\pm$3.7 \\
& \text{\GCDM+V2T}          & 48.4$\pm$15.9& 46.4$\pm$19.7& 44.6$\pm$15.3 \\
& \text{\GDEM+V2T}          & 46.1$\pm$15.6& 40.6$\pm$19.5& 14.6$\pm$12.1 \\
& \text{\ClustGDD+V2T}      &  \cellcolor{teal!30}70.9$\pm$3.1 &  \cellcolor{teal!10}63.0$\pm$3.7 &  \cellcolor{teal!10}68.3$\pm$4.1 \\
& \text{\DaLLME}            &  \cellcolor{teal!10} 64.6$\pm$6.2 &  \cellcolor{teal!30}73.8$\pm$2.1 &  \cellcolor{teal!50}72.7$\pm$1.1 \\
& \text{\algo{}}               &  \cellcolor{teal!50}76.6$\pm$1.8 &  \cellcolor{teal!50}74.6$\pm$4.3 &  \cellcolor{teal!50}74.3$\pm$2.2 \\
\bottomrule
\end{tabular}
% }
\end{table}

% Node classification with training free-LLMs 
\begin{table}[htbp]
\renewcommand{\arraystretch}{0.9} 
\centering
\small
% \footnotesize
\caption{Node classification accuracy (mean~\% $\pm$ std.) with training-free LLM4TAG methods (ICL).}
\vspace{-2ex}
\label{tab:distillation_icl_tf}
\addtolength{\tabcolsep}{-0.3em}  
\begin{tabular}{l  c c| c c}  % Add vertical separator after Dataset column
\toprule
\multirow{2}{*}{\textbf{Dataset}} &  \multicolumn{4}{c}{\textbf{Methods}} \\  
\cmidrule(lr){2-5}  
 & \makecell{\text{Direct}} & \makecell{\text{\algo{}+Direct}} & \makecell{\text{NS.}} & \makecell{\text{\algo{}+NS.}} \\  
\midrule
{\em Cora}      & 59.9$\pm$0.2 & 62.3$\pm$0.6 & 60.5$\pm$0.7 & 64.1$\pm$0.7 \\ 
\midrule
 {\em History}   & 13.6$\pm$0.0 & 23.5$\pm$2.3 & 14.8$\pm$0.0 & 32.5$\pm$4.6 \\ 
\bottomrule
\end{tabular}
\vspace{-3ex}
\end{table}

\subsection{LLM4TAG Performance Evaluation}

We train three LLM4TAG methods, \GNNLLM \cite{wullms}, \ENGINE \cite{zhu2024efficient}, and \TAPE \cite{he2023harnessing}, with $r=0.5$. To ensure fair comparison, we use V2T \cite{morris2023text} to map synthetic attribute matrices to text. Table~\ref{tab:distillation_llm4g} shows that \algo{} achieves the best transfer performance to these frameworks. On {\em Cora}, \algo{} reaches 70.9\%, 73.9\%, and 70.4\% on the three methods, respectively, outperforming all baselines and narrowing the gap to the full-data baseline. On {\em History}, \algo{} attains 76.6\%, 74.6\%, and 74.3\%, surpassing the full-data baseline for \ENGINE and \TAPE. Human-readable synthetic texts preserve the semantics required by LLM4TAG models; in contrast, \GCond and \GDEM with V2T perform poorly, likely due to embedding misalignment caused by V2T.
% For preprocessing and training time comparison between full methods and methods on $\G'$, please refer to Appendix~\ref{appendix:real-time}. 

We also use \algo{}($\mathcal{S}'$) for training-free LLM4TAG, which includes two modes: Direct (i.e., LLMs directly give predictions based on node texts) and Neighbor Summarization (NS, i.e., LLMs first summarize the text of a node’s neighbors, then make predictions based on the node’s own text and the neighbor summary). We apply \algo{}($\mathcal{S}'$) via {\em in-context learning} (ICL): prompts are constructed based on the distances between input embeddings and synthetic node attributes. Table~\ref{tab:distillation_icl_tf} shows consistent performance gains: on {\em Cora}, we observe +2.4\% (Direct) and +3.6\% (Nbr.Sum.); on {\em History}, the gains are +9.9\% and +17.7\%. These results show that synthetic texts are effective ICL exemplars. 

\subsection{Ablation Study}

We conduct ablation studies with $r=0.05$ to validate the effectiveness of modules in our framework in Table~\ref{tab:ablation_datasets}. (w/o: module removing, w: another module as alternative.)
Removing the GA pathway leads to significant performance degradation, particularly on \textit{Photo}, where accuracy drops by 11.5\%, indicating the critical role of graph structure. Disabling CoST consistently reduces performance across all datasets, confirming that it effectively enhances the dual-pathway architecture.
Removing our reassignment after K-Means hurts results, validating the necessity of reassignment for more pure clusters. 
We omit graph-text collaborative encoding and employ raw attributes from SBERT for sketching. The performance decline verifies the necessity of the encoding module.  
Furthermore, applying vanilla self-training to dual-path encoders and baseline methods (e.g., GCDM+ST) yields inferior performance compared to \algo{}($\XM'$), demonstrating the superiority of Graph-Text Collaborative Encoding in semi-supervised distillation.
We evaluate alternative text generation strategies. Variants without LLM summarization, with random words, or with key sentences all underperform our keywords-based approach, with random words causing severe degradation up to 15.5\% on Photo. 
We build summarization prompts without synthetic label injection, causing performance drops and instability across most datasets. This proves that synthetic label constraints enable LLMs to produce more category-aware summaries. 
Direct generation without candidate set construction also reduces accuracy, validating the importance of candidate selection. Finally, we select candidates only via WSD. Results show both WSD scores and validation accuracy are essential for optimal candidate selection.

\begin{table}[t]
\centering
% \footnotesize
\small
\caption{Ablation study: accuracy (mean~\% $\pm$ std).}
\label{tab:ablation_datasets}
\vspace{-2ex}
\begin{tabular}{lcccc}
\toprule
Modules & \textit{Cora} & \textit{Photo} & \textit{History} & \textit{WikiCS} \\
\midrule
w/o GA                  & 75.3$\pm$1.0          & 52.0$\pm$5.6          & 70.8$\pm$3.9          & 65.9$\pm$1.5 \\
w/o GF                  & 76.6$\pm$1.6          & 61.4$\pm$1.8          & 71.8$\pm$2.6          & 75.8$\pm$0.9 \\
w/o CoST                & 75.7$\pm$0.9          & 60.6$\pm$0.6          & 64.8$\pm$4.1          & \cellcolor{teal!30}76.4$\pm$0.7 \\
w/o Reassignment        & 75.2$\pm$2.2          & 61.8$\pm$1.4          & \cellcolor{teal!30}75.4$\pm$4.0 & 68.1$\pm$3.3 \\
w SBERT Clustering        & 73.8$\pm$0.8          & 59.6$\pm$1.8          & 68.9$\pm$5.7   & 71.8$\pm$0.7 \\
w vanilla ST              & \cellcolor{teal!30}77.4$\pm$1.0 & \cellcolor{teal!10} 63.4$\pm$0.7          & 73.2$\pm$2.0   & \cellcolor{teal!10} 76.2$\pm$0.5 \\
\GCDM+ST                & \cellcolor{teal!10}77.3$\pm$2.2 & \cellcolor{teal!50}70.8$\pm$1.9 & 70.1$\pm$3.2          & 74.9$\pm$3.2 \\
\GDEM+ST                & 73.1$\pm$2.2          & 54.7$\pm$1.8          & 72.0$\pm$4.2          & 50.8$\pm$1.2 \\
\ClustGDD+ST            & 74.3$\pm$1.2          & 62.8$\pm$4.5          & \cellcolor{teal!10}74.5$\pm$3.1 & 62.8$\pm$4.9 \\
\algo{}($\XM'$)         & \cellcolor{teal!50}77.5$\pm$1.3 & \cellcolor{teal!30}63.5$\pm$1.7 & \cellcolor{teal!50}75.9$\pm$1.7 & \cellcolor{teal!50}76.6$\pm$0.3 \\
\midrule
w/o LLM Summary         & 71.8$\pm$1.3          & 62.0$\pm$1.0 & \cellcolor{teal!30}73.3$\pm$2.0 & 70.4$\pm$1.5 \\
w/o Syn Label           & \cellcolor{teal!30}73.5$\pm$1.0          & 59.9$\pm$1.9 & 67.2$\pm$6.5 & \cellcolor{teal!50}72.7$\pm$1.6 \\
w Random Words          & 69.7$\pm$2.1          & 49.1$\pm$3.3          & 63.0$\pm$7.7          & 67.3$\pm$1.9 \\
w Key Sentence          & 72.6$\pm$1.0 & 58.9$\pm$2.4          & 70.6$\pm$2.9          & 68.2$\pm$3.7 \\
w/o Candidates set      & 72.5$\pm$1.5 & \cellcolor{teal!10}62.1$\pm$5.5 & 71.6$\pm$1.0 & \cellcolor{teal!10}71.5$\pm$0.9 \\
w WSD-only       & \cellcolor{teal!10} 73.2$\pm$1.2          & \cellcolor{teal!30}63.3$\pm$1.2 & \cellcolor{teal!10}72.2$\pm$1.7 & 70.7$\pm$2.6 \\
\algo{}($\mathcal{S}'$) & \cellcolor{teal!50}73.9$\pm$0.7 & \cellcolor{teal!50}64.6$\pm$1.4 & \cellcolor{teal!50}73.4$\pm$2.8 & \cellcolor{teal!30}71.9$\pm$2.5 \\
\bottomrule
\end{tabular}
\vspace{-2ex}
\end{table}

\subsection{Evaluation of LLMs for Text Synthesis}
Table~\ref{tab:results} compares LLM performance and cost for text synthesis ~\footnote{https://www.aliyun.com/, https://www.deepseek.com/, https://openai.com} with $r=0.05$. Qwen3-max achieves the best results on both datasets (73.9\% on Cora, 73.4\% on History) at moderate cost, while smaller Qwen3 variants (1.7B--8B) offer competitive accuracy with significantly lower expense. Scaling model size within the Qwen3 series does not consistently improve performance, with 8B outperforming 32B on Cora. GPT-5 achieve comparable accuracy to top performers but at 5--7$\times$ higher price. All methods yield similar low token usage, compared to the full graphs\footnote{Token number estimated by https://pypi.org/project/tiktoken/}. Overall, cost-efficient models match or exceed expensive alternatives, suggesting model selection matters more than scale for this task.

%     % Qwen3-4B    &  (0.04, 0.17)  & 71.1$\pm$0.6      & 72.9$\pm$3.6     \\   \\
%     % Qwen3-14B   &  (0.14, 0.56)  &  70.0$\pm$1.3    &  69.5$\pm$3.0    \\
%     Gemini-2.5-Flash & 0.30 / 2.5 &  71.2$\pm$1.1     &  71.5$\pm$3.4       \\
%     Gemini-2.5-Pro  & 1.25 / 10.00   &  72.0$\pm$1.2       & 71.2$\pm$1.1     \\

\begin{table}[t]
  \centering
  \footnotesize
  \caption{Accuracy and I/O price of various LLMs.}
  \label{tab:results}
  \vspace{-2ex}
  \setlength{\tabcolsep}{3pt}
  \begin{tabular}{@{}lccccc@{}}
    \toprule
    \multirow{2}{*}{\textbf{Model}} & \textbf{I/O} & \multicolumn{2}{c}{\textit{Cora}} & \multicolumn{2}{c@{}}{\textit{History}} \\
    \cmidrule(lr){3-4} \cmidrule(l){5-6}
    & \textbf{(\$/1M)} & Acc & \#Tokens (M) & Acc & \#Tokens (M) \\
    \midrule
    Qwen3-1.7B  & 0.04/0.17 & 66.2$\pm$2.6 & 0.0226  & 72.6$\pm$1.7 & 0.0379 \\
    Qwen3-8B    & 0.07/0.28 & 71.0$\pm$0.7 & 0.0228 & 72.4$\pm$1.9 & 0.0377 \\
    Qwen3-32B   & 0.28/1.12 & 70.5$\pm$0.8 & 0.0231  & 71.8$\pm$2.0 & 0.0366 \\
    Qwen3-max   & 0.35/1.40 & \cellcolor{teal!50} 73.9$\pm$0.7 & 0.0237  & \cellcolor{teal!30} 73.4$\pm$2.8 & 0.0383 \\
    DeepSeek-V3.2 & 0.28/0.42 & 71.3$\pm$1.3 & 0.0223 & \cellcolor{teal!10} 73.1$\pm$2.8 & 0.0372 \\
    GPT-4.1-mini  & 0.40/1.60   & \cellcolor{teal!30} 72.2$\pm$1.1 & 0.0228  & 69.1$\pm$1.7 & 0.0371   \\
    GPT-4.1  & 2.00/8.00   & 72.0$\pm$1.2  &  0.0230 & 72.8$\pm$2.9  & 0.0369  \\
    GPT-5-mini  & 0.25/2.0  & 70.0$\pm$1.5 & 0.0221  & 72.2$\pm$3.9 & 0.0351 \\
    GPT-5       & 1.25/10.0 & \cellcolor{teal!30} 72.2$\pm$1.7 & 0.0232 & \cellcolor{teal!50} 73.9$\pm$1.7 & 0.0376 \\
    % Gem-2.5-Flash & 0.30/2.5 & 71.2$\pm$1.1 &  & 71.5$\pm$3.4 &  \\
    % Gem-2.5-Pro & 1.25/10.0 & 72.0$\pm$1.2 &  & 71.2$\pm$1.1 &  \\
    \midrule
    \multicolumn{2}{@{}l}{\text{\#Tokens for entire TAGs (M)}} & \multicolumn{2}{c}{ 0.446} & \multicolumn{2}{c@{}}{12.5 } \\
    \bottomrule
  \end{tabular}
  \vspace{-2ex}
\end{table}

\subsection{In-depth Analysis of CoST and Sketching}
Figure~\ref{fig:ga-gf-cost} tracks accuracy across CoST iterations, i.e., self-training steps with progressively increasing pseudo-label ratio, for the GA and GF pathways, and the fused final prediction. GA consistently outperforms GF while final prediction exceeds both, and performance gaps narrow over iterations as joint optimization on pseudo-labels enables bidirectional knowledge transfer. All curves rise rapidly, then plateau or decline, with optimal iterations varying across datasets. We thus iterate until unlabeled nodes are exhausted and select the checkpoint with the highest validation performance.

Here we define the overall purity of clusters in sketching as $\frac{\sum_{j=1}^{N'} \max_{y \in \mathcal{Y}} |\{v_i \in \mathcal{C}_j : y_i = y\}|}{N}$, which measures the extent to which clusters contain nodes of a single class. As shown in Figure~\ref{fig:purity_comparison}, reassignment consistently improves purity,e.g., {\em WikiCS} exhibits the largest relative improvement (from 0.64 to 0.79), indicating that reassignment is effective for datasets with initially noisy cluster structures. These results validate that our reassignment after clustering successfully refines cluster quality, improving sketching.

% \stitle{Original vs.\ Synthetic Attributes}
% Figure~\ref{fig:tsne_comparison}: t-SNE of node attributes. Originals show entangled classes; synthetic sets form well-separated clusters, indicating that distillation preserves and sharpens class-discriminative structure despite heavy compression.

%visuilization-cst-dual path
\begin{figure}[!t]
\centering
\vspace{1mm}
\begin{tikzpicture}[baseline]
    \draw[line width=0.6mm, 
          mark=triangle,  
          color=orange, 
          mark size=1.5pt,  
          mark repeat=0,    
          mark phase=0] 
    (0,0) -- (0.5cm,0) node[right,font=\footnotesize] {$\PM^{\text{\tiny GA}}$};
    
    \draw[line width=0.6mm, 
          mark=square, 
          color=teal, 
          mark size=1.5pt,
          mark repeat=0,
          mark phase=0] 
    (1.5cm,0) -- (2cm,0) node[right,font=\footnotesize] {$\PM^{\text{\tiny GF}}$};
    
    \draw[line width=0.6mm, 
          mark=star,     
          color=mypurple, 
          mark size=1.5pt,
          mark repeat=0,
          mark phase=0] 
    (3.5cm,0) -- (4cm,0) node[right,font=\footnotesize] {$\PM$};
\end{tikzpicture}
\vspace{-2ex}

\subfloat[\em Cora]{
\begin{tikzpicture}[trim axis left,trim axis right]
\begin{axis}[
    width=0.5\columnwidth, height=0.4\columnwidth, 
    axis y line*=left, axis x line*=bottom,
    xlabel style={font=\footnotesize},
    ylabel={Acc(\%)},ylabel style={font=\footnotesize},
    yticklabel style={font=\scriptsize},
    xtick={0,1,2,3,4,5,6,7,8,9},
    xticklabel style={font=\tiny},
    ymin=70, ymax=83,
    every mark/.append style={mark size=0.5pt, fill=none},
    scaled x ticks=false,
    xtick scale label code/.code={}
]
\addplot[line width=0.6mm,mark=none,color=orange] coordinates {
    (0, 75.1) (1, 76.1) (2,78.3) (3, 80.3) (4,81.9 ) 
    (5,79.9) (6,82.4 ) (7,82.0 ) (8,82.4 ) (9,82.3 )
};

\addplot[line width=0.6mm,mark=none,color=teal] coordinates {
    (0, 70.8) (1, 76.3) (2,78.3 ) (3, 78.7) (4,79.2 ) 
    (5,79.1 ) (6, 80.5) (7, 80.3) (8,80.3 ) (9, 80.4)
};

\addplot[line width=0.6mm,mark=none,color=mypurple] coordinates {
    (0, 78.1) (1, 79.0) (2, 80.1) (3, 80.1) (4, 82.7) 
    (5, 82.7) (6, 82.2) (7, 82.2) (8, 82.5) (9,82.4)
};

\end{axis}
\end{tikzpicture}
}\hspace{15mm}  
\subfloat[\em History]{
\begin{tikzpicture}[trim axis left,trim axis right]
\begin{axis}[
    width=0.5\columnwidth, height=0.4\columnwidth,
    axis y line*=left, axis x line*=bottom,
    xlabel style={font=\footnotesize},
    ylabel={Acc(\%)},ylabel style={font=\footnotesize},
    yticklabel style={font=\scriptsize},
    xtick={0,1,2,3,4,5,6,7,8,9},
    xticklabel style={font=\tiny},
    ymin=60, ymax=78,
    every mark/.append style={mark size=0.5pt, fill=none},
    scaled x ticks=false,
    xtick scale label code/.code={}
]
\addplot[line width=0.6mm,mark=none,color=orange] coordinates {
    (0,68.8) (1, 72.8) (2,76.9 ) (3, 76.1) (4, 73.9) 
    (5,71.4 ) (6, 71.2 ) (7,71.4) (8,71.3) (9, 71.3 )
};

\addplot[line width=0.6mm,mark=none,color=teal] coordinates {
    (0,62.9 ) (1, 71.4) (2, 73.9 ) (3, 71.7) (4, 71.6) 
    (5, 71.4) (6, 70.9) (7, 70.9) (8, 71.2) (9, 71.1 )
};

\addplot[line width=0.6mm,mark=none,color=mypurple] coordinates {
    (0, 73.0) (1, 74.3) (2,77.9 ) (3, 76.8) (4, 74.2) 
    (5, 71.9) (6, 71.9) (7,71.8 ) (8, 71.7) (9,72.1)
};

\end{axis}
\end{tikzpicture}
}
\vspace{-2ex}
\caption{Accuracy by $\PM$, $\PM^{\text{\tiny GA}}$, and $\PM^{\text{\tiny GF}}$ when varying \#iterations.}
\label{fig:ga-gf-cost}
\vspace{-2ex}
\end{figure}

\begin{figure}[!t]
  \centering
  \begin{tikzpicture}
    \begin{axis}[
        width=6.0cm, height=3.5cm,
        axis lines=left,
        ymin=0.6, ymax=0.85,
        ylabel={Purity Value},
        ylabel style={font=\footnotesize, label distance=0.2cm},
        xtick=data,
        xticklabels={{\em Cora}, {\em History}, {\em Photo}, {\em WikiCS}},
        xticklabel style={font=\footnotesize},
        ytick={0.6, 0.65, 0.7, 0.75, 0.8, 0.85},
        yticklabel style={font=\small},
        bar width=0.3cm,
        legend pos=outer north east,
        legend style={
          font=\small,          
          fill=none,           
          draw=none,           
          inner sep=1pt,      
          outer sep=0pt,        
          anchor=north,
          at={(0.5,1.08)},    
          legend columns=2
        },
        legend image code/.code={
          \draw[#1, draw=none] (0cm,-0.08cm) rectangle (0.15cm,0.08cm);
        },
        symbolic x coords={{\em Cora}, {\em History}, {\em Photo}, {\em WikiCS}},
        enlarge x limits=0.2,
      ]
      
      \addplot[
        ybar,
        fill=greenacc, 
        draw=greenacc,
        bar shift=-0.2cm,
        draw=none  
      ] coordinates {
        ({\em Cora}, 0.8128)  
        ({\em History}, 0.7673)
        ({\em Photo}, 0.7673)
        ({\em WikiCS}, 0.6378)
      };

      \addplot[
        ybar,
        fill=myorange, 
        draw=myorange,
        bar shift=0.2cm,
        draw=none 
      ] coordinates {
        ({\em Cora}, 0.8364)
        ({\em History}, 0.8027)
        ({\em Photo}, 0.8027)
        ({\em WikiCS}, 0.7911)
      };
      
      \legend{Before, After}
    \end{axis}
    \end{tikzpicture}
    \vspace{-2ex}
  \caption{Cluster purity before/after greedy reassignment.}
  \label{fig:purity_comparison}
\end{figure}

\subsection{Hyperparameter Analysis}
\label{exp:hyperpara}

Figure~\ref{fig:hyper} examines four key hyperparameters. Increasing input keywords $\xi$ from 64 to 512 improves \textit{Cora} gradually to 74.5\%, while \textit{History} remains stable around 73.3\% and then drops to 70.6\% at 512, suggesting excessive keywords may introduce noise. Output tokens limits $\kappa$ show similar divergence: \textit{Cora} improves to 73.9\% at $\kappa=256$  while \textit{History} at 73.4\% with $\kappa=12 $. For the number of candidate summaries per cluster $q$. Our method is robust on \textit{Cora} with 74.0$\pm$0.2\% across $q \in \{2,3,4,5\}$. \textit{History} drops to 70.4\% at $q=4$ but recovers at $q=5$; 
For number of candidate synthetic graphs $n_s$,\textit{Cora} maintains 72.8\%--74.1\% across $\{10, 50, 100, 200\}$. \textit{History} improves from 72.9\% to 75.5\%, demonstrating larger pools benefit complex textual datasets. 
% hyper
\begin{figure}[t]
\centering
\vspace{1mm}
\begin{tikzpicture}[baseline]
    \draw[line width=0.4mm,mark=diamond,color=myorange-new,mark size=2.5pt] (0,0) -- (0.5cm,0) node[right,font=\footnotesize] {\em Cora};
    \draw[line width=0.4mm,mark=o,color=mygreen-new,mark size=2.5pt] (1.5cm,0) -- (2cm,0) node[right,font=\footnotesize] {\em History};
\end{tikzpicture}

\vspace{-2ex}
% Subplot 1: Varying $\xi$
\subfloat[Varying $\xi$]{
\begin{tikzpicture}[trim axis left,trim axis right]
\begin{axis}[
    width=0.5\columnwidth, height=\columnwidth/2.5,
    axis y line*=left, axis x line*=bottom,
    ylabel={\it Acc},ylabel style={font=\tiny,color=myorange-new},
    yticklabel style={font=\scriptsize,color=myorange-new},
    symbolic x coords={64,128,256,512},
    xtick=data,
    xticklabel style={font=\tiny,rotate=45},
    ymin=68, ymax=76,
    every mark/.append style={mark size=2.5pt},
    scaled x ticks=false,
    xtick scale label code/.code={},
]
\addplot[line width=0.4mm,mark=diamond,color=myorange-new] coordinates {
    (64,73.4) (128,72.6) (256,73.9) (512,74.5)
};
\end{axis}
\begin{axis}[
    width=0.5\columnwidth, height=\columnwidth/2.5,
    axis y line*=right, axis x line=none,
    ylabel={\it Acc},ylabel style={font=\tiny,color=mygreen-new},
    yticklabel style={font=\scriptsize,color=mygreen-new},
    symbolic x coords={64,128,256,512},
    xtick=data,
    xticklabel=\empty,
    ymin=68, ymax=76,
    every mark/.append style={mark size=2.5pt},
    scaled x ticks=false,
    xtick scale label code/.code={},
]
\addplot[line width=0.4mm,mark=o,color=mygreen-new] coordinates {
    (64,73.4) (128,73.3) (256,73.4) (512,70.6)
};
\end{axis}
\end{tikzpicture}
}\hspace{15mm}  % Horizontal spacing between subplots
\subfloat[Varying $\kappa$]{
\begin{tikzpicture}[trim axis left,trim axis right]
\begin{axis}[
    width=0.5\columnwidth, height=\columnwidth/2.5,
    axis y line*=left, axis x line*=bottom,
    ylabel={\it Acc},ylabel style={font=\tiny,color=myorange-new},
    yticklabel style={font=\scriptsize,color=myorange-new},
    symbolic x coords={64,128,256,512},
    xtick=data,
    xticklabel style={font=\tiny,rotate=45},
    ymin=68, ymax=76,
    every mark/.append style={mark size=2.5pt},
    scaled x ticks=false,
    xtick scale label code/.code={},
]
\addplot[line width=0.4mm,mark=diamond,color=myorange-new] coordinates {
    (64,70.0) (128,71.8) (256,73.9) (512,73.3)
};
\end{axis}
\begin{axis}[
    width=0.5\columnwidth, height=\columnwidth/2.5,
    axis y line*=right, axis x line=none,
    ylabel={\it Acc},ylabel style={font=\tiny,color=mygreen-new},
    yticklabel style={font=\scriptsize,color=mygreen-new},
    symbolic x coords={64,128,256,512},
    xtick=data,
    xticklabel=\empty,
    ymin=68, ymax=76,
    every mark/.append style={mark size=2.5pt},
    scaled x ticks=false,
    xtick scale label code/.code={},
]
\addplot[line width=0.4mm,mark=o,color=mygreen-new] coordinates {
    (64,72.4) (128,73.4) (256,73.3) (512,73.3)
};
\end{axis}
\end{tikzpicture}
}

\subfloat[Varying $q$]{
\begin{tikzpicture}[trim axis left,trim axis right]
\begin{axis}[
    width=0.5\columnwidth, height=\columnwidth/2.5,
    axis y line*=left, axis x line*=bottom,
    ylabel={\it Acc},ylabel style={font=\tiny,color=myorange-new},
    yticklabel style={font=\scriptsize,color=myorange-new},
    symbolic x coords={2,3,4,5},
    xtick=data,
    xticklabel style={font=\tiny,rotate=45},
    ymin=68, ymax=76,
    every mark/.append style={mark size=2.5pt},
    scaled x ticks=false,
    xtick scale label code/.code={},
]
\addplot[line width=0.4mm,mark=diamond,color=myorange-new] coordinates {
    (2,74.0) (3,73.9) (4,73.9) (5,74.1)
};
\end{axis}
\begin{axis}[
    width=0.5\columnwidth, height=\columnwidth/2.5,
    axis y line*=right, axis x line=none,
    ylabel={\it Acc},ylabel style={font=\tiny,color=mygreen-new},
    yticklabel style={font=\scriptsize,color=mygreen-new},
    symbolic x coords={2,3,4,5},
    xtick=data,
    xticklabel=\empty,
    ymin=68, ymax=76,
    every mark/.append style={mark size=2.5pt},
    scaled x ticks=false,
    xtick scale label code/.code={},
]
\addplot[line width=0.4mm,mark=o,color=mygreen-new] coordinates {
    (2,73.6) (3,73.4) (4,70.4) (5,72.5)
};
\end{axis}
\end{tikzpicture}
}  % Horizontal spacing between subplots
\hspace{15mm}
\subfloat[Varying $n_s$]{
\begin{tikzpicture}[trim axis left,trim axis right]
\begin{axis}[
    width=0.5\columnwidth, height=\columnwidth/2.5,
    axis y line*=left, axis x line*=bottom,
    ylabel={\it Acc},ylabel style={font=\footnotesize,color=myorange-new},
    yticklabel style={font=\scriptsize,color=myorange-new},
    symbolic x coords={10,50,100,200},
    xtick=data,
    xticklabel style={font=\tiny,rotate=45},
    ymin=68, ymax=76,
    every mark/.append style={mark size=2.5pt},
    scaled x ticks=false,
    xtick scale label code/.code={},
]
\addplot[line width=0.4mm,mark=diamond,color=myorange-new] coordinates {
    (10,74.0) (50,73.9) (100,72.8) (200,74.1)
};
\end{axis}
\begin{axis}[
    width=0.5\columnwidth, height=\columnwidth/2.5,
    axis y line*=right, axis x line=none,
    ylabel={\it Acc},ylabel style={font=\footnotesize,color=mygreen-new},
    yticklabel style={font=\scriptsize,color=mygreen-new},
    symbolic x coords={10,50,100,200},
    xtick=data,
    xticklabel=\empty,
    ymin=68, ymax=76,
    every mark/.append style={mark size=2.5pt},
    scaled x ticks=false,
    xtick scale label code/.code={},
]
\addplot[line width=0.4mm,mark=o,color=mygreen-new] coordinates {
    (10,72.9) (50,72.2) (100,73.4) (200,75.5)
};
\end{axis}
\end{tikzpicture}
}
\caption{Hyperparameter analysis: effect of $\xi$, $\kappa$, $q$, and $n_s$ on accuracy (mean~\%) $\pm$ std.).}
\label{fig:hyper}
% Shared legend for all subplots

\end{figure}
\subsection{Visualization}
\label{exp:vis}
As shown in Figure~\ref{fig:tsne_comparison}, t-SNE visualizations of node attributes reveal that original data features exhibit entangled class boundaries, while synthetic features generated by our distillation method form clearly separated clusters. This result confirms that our distillation strategy not only preserves but also sharpens class-discriminative structural information, even with substantial compression ratios.

\usetikzlibrary{matrix, positioning}

\begin{figure}[htbp]
    \centering
    \begin{tikzpicture}[
        node distance=0.5cm,
        every node/.style={inner sep=0pt}
    ]

        \matrix (m) [matrix of nodes, column sep=0.5cm, row sep=0.8cm] {
            \includegraphics[width=0.4\linewidth]{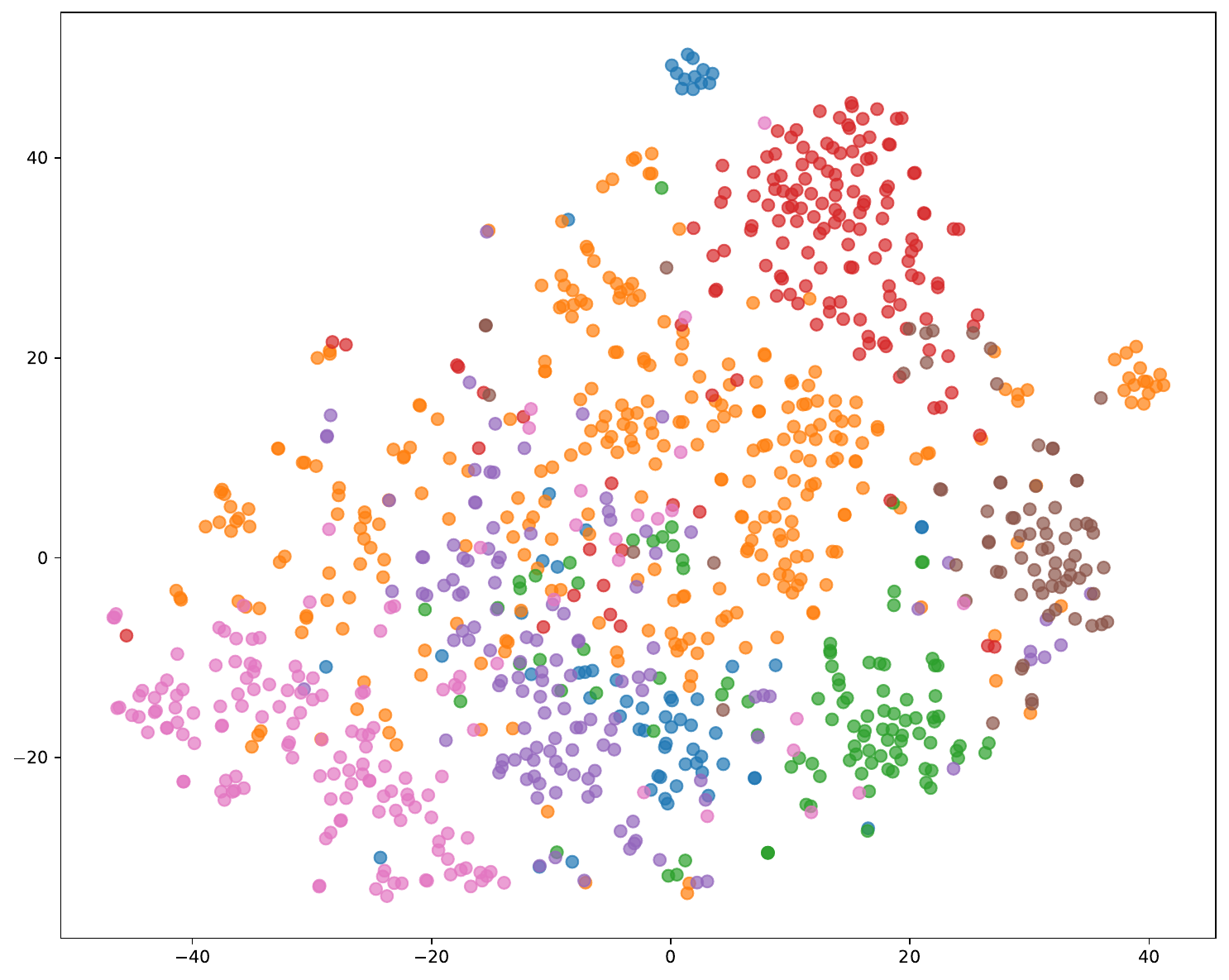} &
            \includegraphics[width=0.4\linewidth]{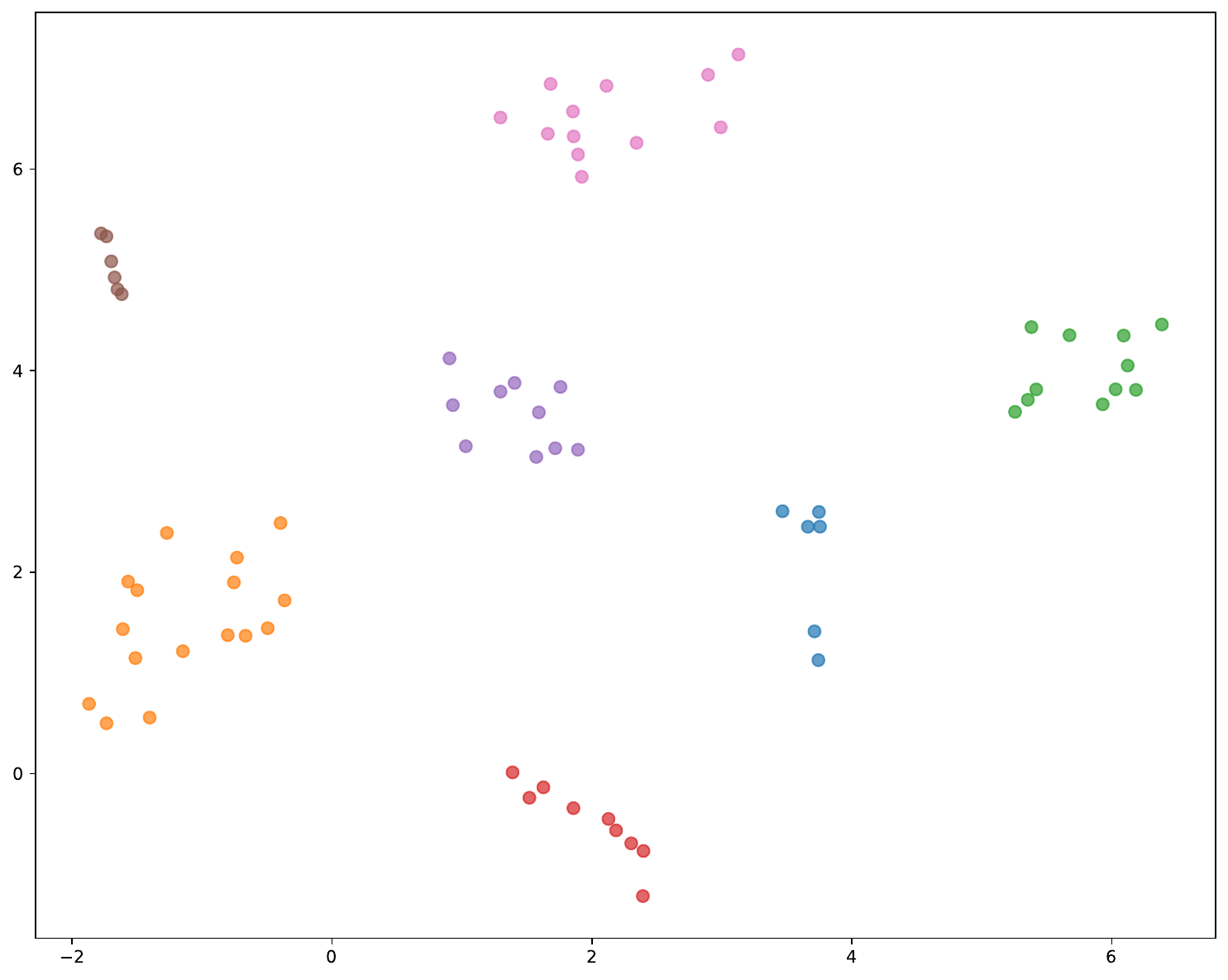} \\
            \includegraphics[width=0.4\linewidth]{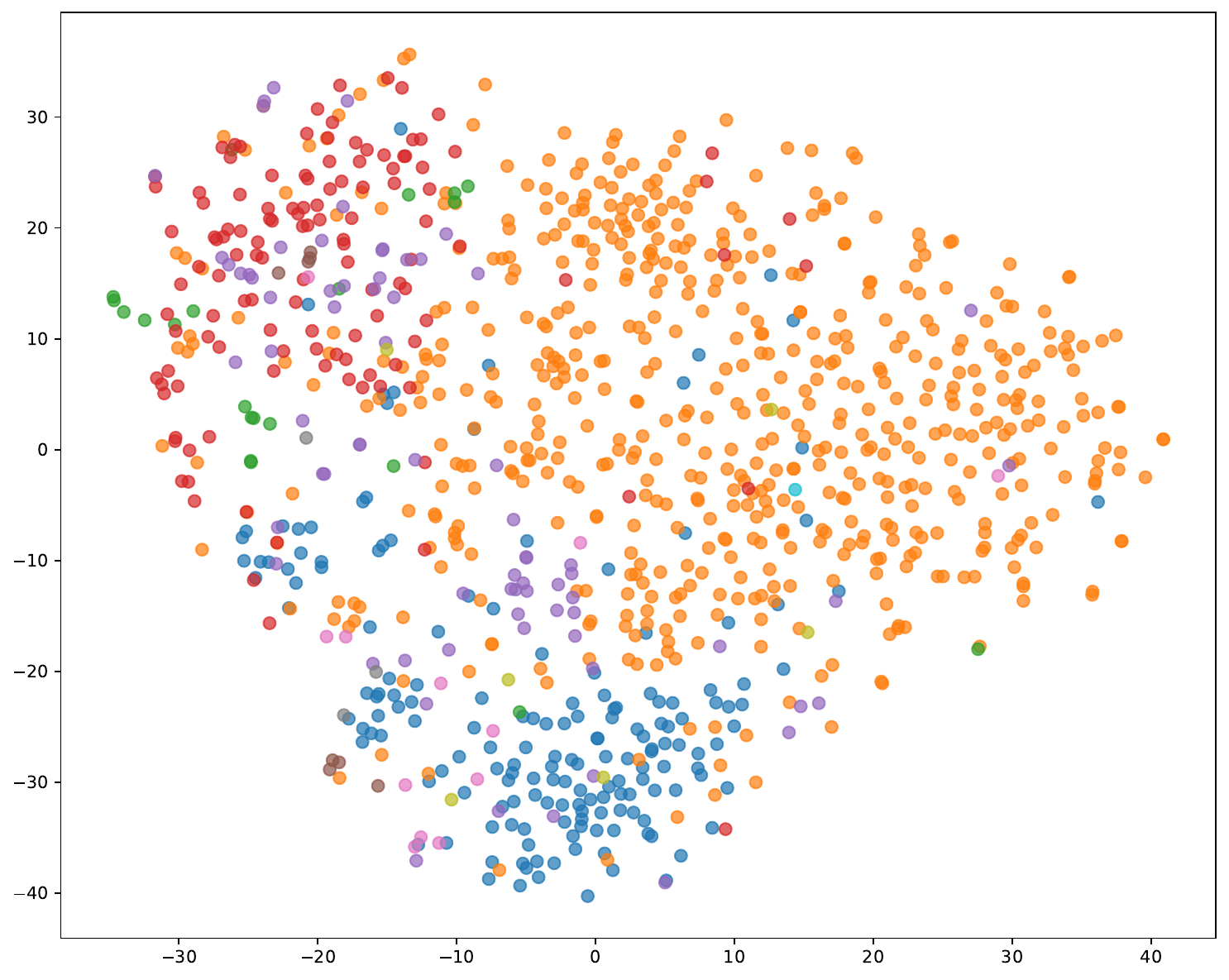} &
            \includegraphics[width=0.4\linewidth]{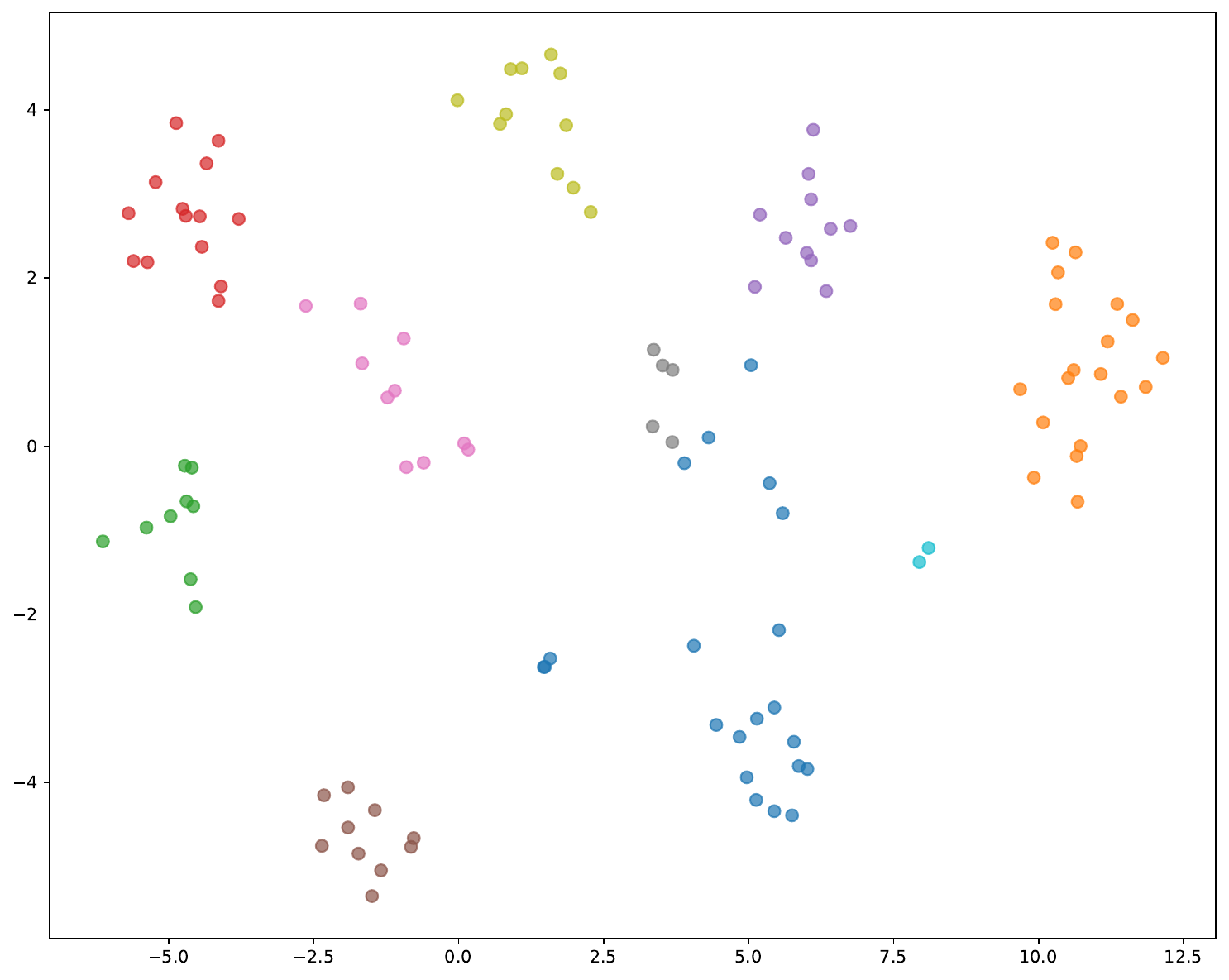} \\
        };
        
        \node[below=0.1cm of m-1-1] {(a) {\em Cora} ($\XM$)};

        \node[below=0.1cm of m-1-2] {(b) {\em Cora} ($\XM'$)};

        \node[below=0.1cm of m-2-1] {(c) {\em History} ($\XM$)};

        \node[below=0.1cm of m-2-2] {(d) {\em History} ($\XM'$)};

        \node[below=0.1cm of m-1-1, opacity=0] (label1) {a};
        \node[below=0.1cm of m-1-2, opacity=0] (label2) {b};
        \node[below=0.1cm of m-2-1, opacity=0] (label3) {c};
        \node[below=0.1cm of m-2-2, opacity=0] (label4) {d};
        
    \end{tikzpicture}
    
    \caption{t-SNE of node attributes: original ($\XM$) vs.\ synthetic ($\XM'$) on {\em Cora} and {\em History}.}
    \label{fig:tsne_comparison}
\end{figure}

\subsection{Conclusion}
\label{sec:conclusion}

This paper proposes \algo{} for semi-supervised TAG distillation. Guided by WSD, \algo{} introduces three modules: graph-text collaborative encoding, WSD-based graph sketching, and keywords-based LLM text synthesis. \algo{} achieves the state-of-the-art performance compared to current graph and text distillation methods on various real-world datasets. The synthetic TAGs obtained from \algo{} enable downstream LLM4Graph methods to be efficient and effective. A potential direction for future work is to explore multi-modal graph distillation for continual learning.
\section{Acknowledgments}
Renchi Yang is supported by the Guangdong and Hong Kong Universities ''1+1+1'' Joint Research Collaboration Scheme, project No.: 2025A0505000002, the NSFC (No. 62302414), the Hong Kong RGC ECS grant (No. 22202623), and YCRG (No. C2003-23Y). Tsz Nam Chan is supported by the Natural Science Foundation of China under grants 23IAA00610 and 62572326.

\pagebreak
\bibliographystyle{acm}
\balance
\bibliography{acm_ref}

% \pagebreak
\appendix
\section{Complete Algorithm and Analysis}
\label{sec:algorithm_complexity}

This section provides a formal algorithmic description of the \algo{} framework along with a detailed complexity analysis of its three core modules.

\stitle{Algorithm Overview}
Algorithms~\ref{alg:encoding}, \ref{alg:sketching}, and~\ref{alg:synthesis} present the complete \algo{} pipeline, which takes a large-scale TAG $\G$ as input and produces a compact synthetic TAG $\G'$ with human-readable text attributes.

\stitle{Notation}
We follow Section~\ref{sec:notations} for basic TAG notations: $N = |\V|$, $M = |\EDG|$, $D$ the feature dimension, $K$ the number of classes. For Module 1 (\S\ref{sec:feat-encode}): dual-pathway encoders $\textsf{GCN}$ and $\textsf{MLP}$, hidden dimension $h$, attention parameters $\WM_{\text{att}} \in \mathbb{R}^{h}$ and $\mathbf{b}_{\text{att}} \in \mathbb{R}$, soft labels $\PM^{\text{\tiny GA}}$, $\PM^{\text{\tiny GF}}$, $\PM$, confidence scores $\max_{1\le k\le K}\PM^{\text{\tiny GA}}_{i,k}$, pseudo-label ratio $\rho$ with increment step $0.1$, consensus set $\mathcal{R}^{\text{\tiny CS}}$ and disagreement subsets $\mathcal{R}^{\text{\tiny DS}}_{\text{\tiny GA}}$, $\mathcal{R}^{\text{\tiny DS}}_{\text{\tiny GF}}$, labeled sets $\mathcal{U}$, $\mathcal{U}^{\text{\tiny GA}}$, $\mathcal{U}^{\text{\tiny GF}}$. For Module 2 (\S\ref{sec:sketch}): target condensed size $N'$, sketching matrix $\SM \in \mathbb{R}^{N \times N'}$, clusters $\{\C_1,\ldots,\C_{N'}\}$, K-Means iterations. For Module 3 (\S\ref{sec:text-synthesis}): keyword count $\xi$, intra-cluster TF-IDF, keywords $\mathcal{W}_i$, candidate count $q$ (typically $q=3$), prompt $\mathcal{P}$, sample count $n_s$ (typically $n_s=100$), top-$k$ validation (often $k=5$), text encoder (SBERT).

\begin{algorithm}[htb]
\caption{Graph-Text Collaborative Encoding}
\label{alg:encoding}
% \DontPrintSemicolon
\KwIn{TAG $\G$ with $\AM$, $\XM$, $\YM$, and initial labeled set $\V_{\text{tr}}$.}
\KwOut{Soft labels $\PM$ and labeled sets $\mathcal{U}$, $\mathcal{U}^{\text{\tiny GA}}$, $\mathcal{U}^{\text{\tiny GF}}$.}
\vspace{0.5ex}
\textbf{Step 1 --- Dual-pathway encoding.}\;
$\HM^{\text{\tiny GA}} \gets \textsf{GCN}(\AM, \XM)$; $\HM^{\text{\tiny GF}} \gets \textsf{MLP}(\XM)$\;
$\PM^{\text{\tiny GA}} \gets \textsf{softmax}(\HM^{\text{\tiny GA}})$; $\PM^{\text{\tiny GF}} \gets \textsf{softmax}(\HM^{\text{\tiny GF}})$\;
\For{$i = 1$ \KwTo $N$}{
    $\alpha^{\text{\tiny GA}}_i \gets \sigma(\WM_{\text{att}} \cdot \HM_i^{\text{\tiny GA}} + \mathbf{b}_{\text{att}})$; $\alpha^{\text{\tiny GF}}_i \gets 1-\alpha^{\text{\tiny GA}}_i$\;
    $\PM_i \gets \textsf{softmax}(\alpha^{\text{\tiny GA}}_i \odot \HM_i^{\text{\tiny GA}} + \alpha^{\text{\tiny GF}}_i \odot \HM_i^{\text{\tiny GF}})$\;
}
\vspace{0.5ex}
\textbf{Step 2 --- Collaborative self-training.}\;
Initialize $\mathcal{U} \gets \V_{\text{tr}}$, $\mathcal{U}^{\text{\tiny GA}} \gets \V_{\text{tr}}$, $\mathcal{U}^{\text{\tiny GF}} \gets \V_{\text{tr}}$; $\rho \gets 0.1$\;
\While{$\rho \le 1.0$}{
    Recompute $\HM^{\text{\tiny GA}}$, $\HM^{\text{\tiny GF}}$, $\PM^{\text{\tiny GA}}$, $\PM^{\text{\tiny GF}}$, $\PM$\;
    $\mathcal{R} \gets$ top-$\rho$ proportion of unlabeled nodes $\V \setminus \mathcal{U}$ by confidence $\max_{k}\PM^{\text{\tiny GA}}_{i,k}$\;
    $\mathcal{R}^{\text{\tiny CS}} \gets \{v_i \in \mathcal{R} \mid \argmax{k}\PM^{\text{\tiny GA}}_{i,k} = \argmax{k}\PM^{\text{\tiny GF}}_{i,k}\}$\;
    $\mathcal{R}^{\text{\tiny DS}} \gets \mathcal{R} \setminus \mathcal{R}^{\text{\tiny CS}}$\;
    $\mathcal{R}^{\text{\tiny DS}}_{\text{\tiny GA}} \gets \{v_i \in \mathcal{R}^{\text{\tiny DS}} \mid \max_{k}\PM^{\text{\tiny GA}}_{i,k} > \max_{k}\PM^{\text{\tiny GF}}_{i,k}\}$\;
    $\mathcal{R}^{\text{\tiny DS}}_{\text{\tiny GF}} \gets \mathcal{R}^{\text{\tiny DS}} \setminus \mathcal{R}^{\text{\tiny DS}}_{\text{\tiny GA}}$\;
    $\mathcal{U} \gets \mathcal{U} \cup \mathcal{R}^{\text{\tiny CS}} \cup \mathcal{R}^{\text{\tiny DS}}_{\text{\tiny GA}} \cup \mathcal{R}^{\text{\tiny DS}}_{\text{\tiny GF}}$\;
    $\mathcal{U}^{\text{\tiny GA}} \gets \mathcal{U}^{\text{\tiny GA}} \cup \mathcal{R}^{\text{\tiny CS}} \cup \mathcal{R}^{\text{\tiny DS}}_{\text{\tiny GA}}$\;
    $\mathcal{U}^{\text{\tiny GF}} \gets \mathcal{U}^{\text{\tiny GF}} \cup \mathcal{R}^{\text{\tiny CS}} \cup \mathcal{R}^{\text{\tiny DS}}_{\text{\tiny GF}}$\;
    Update $\WM_{\text{att}}$, $\textsf{GCN}$, $\textsf{MLP}$ by minimizing $\mathcal{L} = \textsf{CE}(\PM, \hat{\YM}) + \textsf{CE}(\PM^{\text{\tiny GF}},\hat{\YM}^{\text{\tiny GF}}) + \textsf{CE}(\PM^{\text{\tiny GA}},\hat{\YM}^{\text{\tiny GA}})$\;
    $\rho \gets \rho + 0.1$\;
}
\Return $\PM$, $\mathcal{U}$, $\mathcal{U}^{\text{\tiny GA}}$, $\mathcal{U}^{\text{\tiny GF}}$\;
\end{algorithm}

\stitle{Complexity Analysis}
We analyze the time complexity of each module. Let $N$ be the number of nodes, $M$ the number of edges, $D$ the feature dimension of $\XM$, $L$ the number of GCN layers, $h$ the hidden dimension, $K$ the number of classes, $T = 10$ the number of self-training epochs ($\rho$ from $0.1$ to $1.0$ with step $0.1$), $I_{\text{km}}$ the number of K-Means iterations, $W$ the average text length, $V$ the total vocabulary size (TF-IDF), $\mathcal{T}_{\text{LLM}}$ the cost of one LLM call, $I_{\text{sink}}$ the number of Sinkhorn iterations for WSD computation, and $C$ the size of the sampled node subset used in WSD ($N' < C < N$).

\textbf{Module 1: Graph-Text Collaborative Encoding.}
The GA pathway (GCN with $L$ layers) costs $\mathcal{O}(L(MD + Nh^2))$ for sparse matrix multiplication and transformations. The GF pathway (MLP) costs $\mathcal{O}(NDh)$. Attention fusion requires $\mathcal{O}(Nh)$. Collaborative self-training repeats for $T$ iterations, each involving confidence selection $\mathcal{O}(N \log N)$, consensus verification $\mathcal{O}(N)$, and backpropagation $\mathcal{O}(L(MD + Nh^2))$. Total for Module 1:
\begin{equation}
\mathcal{T}_{\text{M1}} = \mathcal{O}(T \cdot L(MD + Nh^2) + TN \log N)
\end{equation}

\textbf{Module 2: WSD-based Graph Sketching.}
K-Means clustering on $N$ soft label vectors of dimension $K$ into $N'$ clusters costs $\mathcal{O}(N \cdot N' \cdot K \cdot I_{\text{km}})$. Greedy reassignment involves computing class-cluster affinities $\mathcal{O}(NK)$ and distance computations $\mathcal{O}(NKN')$. Graph compression via sparse matrix operations costs $\mathcal{O}(ND + M + N'^2)$. Total for Module 2:
\begin{equation}
\mathcal{T}_{\text{M2}} = \mathcal{O}(NN'K \cdot I_{\text{km}} + ND + M + N'^2)
\end{equation}

\begin{algorithm}[htb]
\caption{WSD-based Graph Sketching}
\label{alg:sketching}
% \DontPrintSemicolon
\KwIn{Soft labels $\PM$, $\AM$, $\XM$, $\YM$, and target size $N'$.}
\KwOut{Sketching matrix $\SM$ and condensed $\AM'$, $\XM'$, $\YM'$.}
\vspace{0.5ex}
\textbf{Step 1 --- $K$-Means clustering on soft labels.}\;
$\{\C_1,\C_2,\ldots,\C_{N'}\} \gets \textsf{K-Means}(\PM, N')$\;
Construct $\SM$ s.t. $\SM_{k,i} = \frac{1}{|\C_k|}$ if $v_i \in \C_k$, else $0$\;
\vspace{0.5ex}
\textbf{Step 2 --- Greedy reassignment post-processing.}\;
\For{$k=1$ \KwTo $K$}{
    \For{$j=1$ \KwTo $N'$}{
        Compute affinity $\frac{|\{v_i\in \C_j|\ \YM_{i,k}=1\}|}{|\{v_i\in \V|\ \YM_{i,k}=1\}|}$\;
    }
}
% Select top-$N'$ class-cluster pairs with the highest affinities; assign nodes in these classes to corresponding clusters\;
$\{y_{k_i},\C_{k_i}\}_{i=1}^{N'}\gets$ the pairs with the highest affinity values\;
Assign nodes with label $y_{k_i}$ to $\C_{k_i}\ \forall{1\le i\le N'}$\;
\For{each node $v_i$ w/o labels $\{y_{k_i}\}_{i=1}^{N'}$}{
    Add $v_i$ to $\C_k$ s.t. $\min_{1\le k\le N^\prime}\|\PM_i-\textsf{Mean}(\C_k)\|_2$\;
}
Update $\SM$ accordingly\;
\vspace{0.5ex}
\textbf{Step 3 --- Graph compression.}\;
$\AM' \gets \SM^\top \AM \SM$; $\XM' \gets \SM^\top \XM$\;
$\YM'_{i,k} \gets \begin{cases} 1 & \text{if } k = \argmax{1\le k'\le K}{(\SM^\top \PM)}_{i,k'} \\ 0 & \text{otherwise} \end{cases}$\;
\Return $\SM$, $\AM'$, $\XM'$, $\YM'$\;
\end{algorithm}

\textbf{Module 3: Keywords-based Text Synthesis.}
TF-IDF computation across $N'$ clusters with total vocabulary size $V$ costs $\mathcal{O}(VW \cdot N' + N \cdot W)$. SBERT-based keyword filtering costs $\mathcal{O}(N'\xi D)$. LLM generation for $N'$ nodes with $q$ candidates each costs $\mathcal{O}(N'q \cdot \mathcal{T}_{\text{LLM}})$. WSD computation for $n_s$ candidate graphs using Sinkhorn algorithm costs $\mathcal{O}(n_s(N + N')^2 I_{\text{sink}})$. In practice, sampling a subset of original nodes with size $C$ ($N'<C<N$) is enough, so that the Sinkhorn algorithm can be $\mathcal{O}(n_s(C + N')^2 I_{\text{sink}})$. Total for Module 3:
\begin{equation}
\mathcal{T}_{\text{M3}} = \mathcal{O}(VWN' + N'\xi D + N'q \cdot \mathcal{T}_{\text{LLM}} + n_s(C + N')^2 I_{\text{sink}})
\end{equation}

\textbf{Overall Complexity.}
The total time complexity is $\mathcal{T}_{\algo{}} = \mathcal{T}_{\text{M1}} + \mathcal{T}_{\text{M2}} + \mathcal{T}_{\text{M3}}$. The space complexity is dominated by storing the adjacency matrix ($\mathcal{O}(M)$), feature matrices ($\mathcal{O}(ND + N'h)$), soft labels ($\mathcal{O}(NK)$), and candidate texts ($\mathcal{O}(N'qW)$), yielding $\mathcal{S}_{\algo{}} = \mathcal{O}(M + ND + N'h + NK + N'qW)$.

In practice, since $N' \ll N$, Module 3 operates efficiently despite LLM calls. The dominant cost in Module 1 is the GCN operations scaling linearly with edges $M$. Module 2's clustering is efficient when $N'$ is small. The WSD validation in Module 3 uses a small $n_s$ (e.g., $100$) to balance quality and cost.

% \begin{table}[!t]
% \centering
% \small
% \caption{Time complexity breakdown of \algo{}.}
% \label{tab:complexity}
% \vspace{-2ex}
% \small
% \begin{tabular}{@{}ll@{}}
% \toprule
% \textbf{Component} & \textbf{Time Complexity} \\
% \midrule
% Dual-Pathway Encoders (per epoch) & $\mathcal{O}(L(MD + Nh^2))$ \\
% Collaborative Self-Training ($T$ iterations) & $\mathcal{O}(T(L(MD + Nh^2) + N \log N))$ \\
% K-Means Clustering & $\mathcal{O}(NN'K \cdot I_{\text{km}})$ \\
% Greedy Reassignment & $\mathcal{O}(NKN')$ \\
% Graph Sketching & $\mathcal{O}(ND + M + N'^2)$ \\
% Intra-cluster TF-IDF & $\mathcal{O}(VWN' + NW)$ \\
% SBERT Keyword Filtering & $\mathcal{O}(N'\xi D)$ \\
% LLM Generation & $\mathcal{O}(N'q \cdot \mathcal{T}_{\text{LLM}})$ \\
% WSD-based Selection & $\mathcal{O}(n_s(C+N')^2 I_{\text{sink}})$ \\
% \bottomrule
% \end{tabular}
% \end{table}

\begin{algorithm}[htb]
\caption{Keywords-based Text Synthesis with LLMs}
\label{alg:synthesis}
% \DontPrintSemicolon
\KwIn{Sketching matrix $\SM$, clusters $\{\C_1,\ldots,\C_{N'}\}$, text $\mathcal{S}$, labels $\YM'$, integers $\xi$, $q$, $n_s$, $k$, and prompt $\mathcal{P}$.}
\KwOut{Synthetic texts $\mathcal{S}' = \{s'_1,\ldots,s'_{N'}\}$ for $\G'$.}
\vspace{0.5ex}
\textbf{Step 1 --- Cluster-based keyword extraction.}\;
\For{$i = 1$ \KwTo $N'$}{
    $\mathcal{S}_i \gets \{s_j \mid v_j \in \C_i\}$\;
    $\text{TF-IDF}(w, \mathcal{S}_i) = \text{TF}(w, \mathcal{S}_i) \times \text{IDF}(w, \mathcal{S}_i)$ $\forall{w\in \mathcal{S}_i}$\;
    Select top-$2\xi$ words by intra-cluster TF-IDF\;
    Prune $\xi$ words with lowest embedding distance to $\XM'_i$\;
    Obtain keywords $\mathcal{W}_i = \{w_{i,1},\ldots,w_{i,\xi}\}$\;
}
\vspace{0.5ex}
\textbf{Step 2 --- Candidate text generation with LLMs.}\;
\For{$i = 1$ \KwTo $N'$}{
    $y'_i \gets \argmax{k}\YM'_{i,k}$\;
    $\{s'_{i,1},\ldots,s'_{i,q}\} \gets \textsf{LLM}(\mathcal{P},\mathcal{W}_i,y'_i)$\;
}
\vspace{0.5ex}
\textbf{Step 3 --- WSD-validation-guided selection.}\;
\For{$l = 1$ \KwTo $n_s$}{
    $\mathcal{S}'_l \gets \{s'_{i,j_i} \mid i \in [N'], j_i \sim \text{Uniform}([q])\}$\;
    $\XM'_l \gets \text{SBERT}(\mathcal{S}'_l)$\;
    Compute $\text{WSD}_l \gets \textsf{WSD}(\XM, \XM'_l)$\;
}
Select top-$k$ candidates with the lowest WSD\;
% Evaluate downstream task performance on original dataset; 
Choose $\mathcal{S}'$ with the best downstream task performance\;
\Return $\mathcal{S}'$\;
\end{algorithm}

\section{Theoretical Proof}
\label{appendix:theory}

\begin{proof}[Proof of Lemma \ref{lem:WSD-clust}]
We establish this equivalence through the geometric form of the 2-Wasserstein distance in the context of graph sketching.

\textbf{Step 1: Distribution construction.} 
Let $\mathcal{P} = \frac{1}{N}\sum_{i=1}^N \delta_{\MM_i}$ denote the empirical distribution of the original features, where $\delta_{\MM_i}$ is the Dirac delta distribution centered at $\MM_i$. The sketching matrix $\SM \in \mathbb{R}^{N \times N'}$ induces a hard clustering partition $\{\mathcal{C}_k\}_{k=1}^{N'}$ of the node set $[N]$, where $\mathcal{C}_k = \{i \in [N] : \SM_{k,i} \neq 0\}$ represents the set of nodes assigned to the $k$-th condensed node.

By construction of $\SM$ in Algorithm~\ref{alg:sketching}, each row $k$ satisfies $\SM_{k,i} = \frac{1}{|\mathcal{C}_k|}$ if $i \in \mathcal{C}_k$, and $\SM_{k,i} = 0$ otherwise. Consequently, the compressed empirical distribution is $\mathcal{Q} = \sum_{k=1}^{N'} \frac{|\mathcal{C}_k|}{N} \delta_{\boldsymbol{\mu}_k}$, where the centroid $\boldsymbol{\mu}_k = \sum_{j=1}^{N} \SM_{k,j} \MM_j = \frac{1}{|\mathcal{C}_k|}\sum_{j \in \mathcal{C}_k} \MM_j$.

\textbf{Step 2: Voronoi property verification.}
We verify that the partition $\{\mathcal{C}_k\}_{k=1}^{N'}$ satisfies the Voronoi property under the squared Euclidean metric. For any sample $\MM_i \in \mathcal{C}_k$ and any other cluster $k' \neq k$, the K-Means clustering in Algorithm~\ref{alg:sketching} (Step 1) assigns each node to the nearest centroid, ensuring:
\[
\|\MM_i - \boldsymbol{\mu}_k\|_2^2 \leq \|\MM_i - \boldsymbol{\mu}_{k'}\|_2^2.
\]

\textbf{Step 3: Optimal transport plan.}
Consider the transport plan $\gamma^* \in \mathbb{R}^{N \times N'}$ induced by the clustering:
\[
\gamma^*_{ik} = 
\begin{cases} 
\frac{1}{N}, & \text{if } \MM_i \in \mathcal{C}_k, \\
0, & \text{otherwise}.
\end{cases}
\]

This plan is feasible: (i) $\sum_{k=1}^{N'} \gamma^*_{ik} = \frac{1}{N}$ for all $i$ (marginal of $\mathcal{P}$), and (ii) $\sum_{i=1}^{N} \gamma^*_{ik} = \frac{|\mathcal{C}_k|}{N}$ for all $k$ (marginal of $\mathcal{Q}$).

\textbf{Step 4: Optimality via Voronoi property.}
For any feasible transport plan $\gamma$, the Voronoi property yields the pointwise lower bound for each $i$:
\[
\sum_{k=1}^{N'} \gamma_{ik} \|\MM_i - \boldsymbol{\mu}_k\|_2^2 
\geq \sum_{k=1}^{N'} \gamma_{ik} \|\MM_i - \boldsymbol{\mu}_{k(i)}\|_2^2 
= \frac{1}{N}\|\MM_i - \boldsymbol{\mu}_{k(i)}\|_2^2,
\]
where $k(i)$ denotes the cluster index of $\MM_i$. Summing over all $i$ gives:
\[
\sum_{i=1}^{N}\sum_{k=1}^{N'} \gamma_{ik} \|\MM_i - \boldsymbol{\mu}_k\|_2^2 
\geq \frac{1}{N}\sum_{i=1}^{N}\|\MM_i - \boldsymbol{\mu}_{k(i)}\|_2^2
= \sum_{i=1}^{N}\sum_{k=1}^{N'} \gamma^*_{ik} \|\MM_i - \boldsymbol{\mu}_k\|_2^2.
\]

Thus $\gamma^*$ is optimal, and the squared WSD equals:
\[
\textsf{WSD}^2(\mathcal{P}, \mathcal{Q}) = \frac{1}{N}\sum_{k=1}^{N'}\sum_{\MM_i \in \mathcal{C}_k}\|\MM_i - \boldsymbol{\mu}_k\|_2^2.
\]

\textbf{Step 5: Equivalence to sketching objective.}
Rewriting using $\mathbb{1}_{[\SM_{k,i}\neq 0]} = \mathbb{1}_{[i \in \mathcal{C}_k]}$ and substituting $\boldsymbol{\mu}_k = \sum_{j=1}^{N} \SM_{k,j} \MM_j$:
\begin{align*}
\textsf{WSD}^2(\MM, \SM^\top\MM) 
&= \frac{1}{N} \sum_{k=1}^{N'} \sum_{i=1}^{N} \mathbb{1}_{[\SM_{k,i}\neq 0]} \cdot \left\| \MM_i - \boldsymbol{\mu}_k \right\|_2^2 \\
&= \frac{1}{N} \sum_{k=1}^{N'} \sum_{i=1}^{N} \mathbb{1}_{[\SM_{k,i}\neq 0]} \cdot \left\| \MM_i - \sum_{j=1}^{N} \SM_{k,j} \MM_j \right\|_2^2.
\end{align*}

Therefore, minimizing $\textsf{WSD}(\MM, \SM^\top\MM)$ is equivalent to the stated minimization problem, which corresponds precisely to the K-Means clustering objective in the feature space.
\end{proof}
\section{TAG Distillation Time and LLM4TAG training Time}
\label{appendix:real-time}

\subsection{TAG Distillation Time Comparison}

 Here we give a simple comparison between \algo{} and \ClustGDD in terms of distillation time, another efficient graph distillation method, at $r=0.05$ on {\em Cora} and {\em History}. We select \ClustGDD as the baseline because it achieves both high efficiency and accuracy, being orders of magnitude faster than other gradient-based or distribution-matching distillation methods while maintaining competitive performance.

Experimental results demonstrate the efficiency trade-offs of the \algo{} method: when utilizing an LLM for text synthesis in $\algo{}(\mathcal{S}')$, it consumes $191.67$ seconds and $291.24$ seconds on {\em Cora} and {\em History} datasets respectively, which is slower than $\algo{}(\XM')$. This overhead primarily stems from calling the LLM to generate $\mathcal{S}'$, which incurs high computational complexity. Meanwhile, both variants lag far behind \ClustGDD ($6.79$ seconds and $10.31$ seconds). However, \algo{} achieves higher accuracy and better interpretability than \ClustGDD, as demonstrated in previous experiments. Therefore, a time gap of the same order of magnitude or even one order of magnitude larger is acceptable.

\begin{table}[t]
\centering
\caption{Distillation running time comparison (seconds)}
\label{tab:time}
\footnotesize
\setlength{\tabcolsep}{4pt}
\begin{tabular}{lccc}
\toprule
\textbf{Dataset} & {$\algo{}(\XM')$} & {$\algo{}(\mathcal{S}')$} & {\ClustGDD} \\
\midrule
% \rowcolor{orange!10}
{\em Cora} & 56.0 & 191.7 &  6.79 \\
% \rowcolor{orange!10}
{History} & 83.1 & 291.2 & 10.31 \\
\bottomrule
\end{tabular}
\end{table}

\subsection{Time Cost of LLM4TAG Methods on $\G$ and $\G'$}

We compare the preprocessing and training times (in seconds) of three LLM4TAG methods,\GNNLLM~\cite{wullms}, \ENGINE~\cite{zhu2024efficient}, and \TAPE~\cite{he2023harnessing} on the original graph $\G$ and the distilled graph $\G'$ at distillation ratio $r=0.5$. \GNNLLM extracts LLM last-layer representation to train GNNs; \ENGINE fuses intermediate-layer LLM representations for GNN training; \TAPE prompts LLMs to generate predictions and natural language explanations for textual graph data, incurring substantially higher computational costs due to autoregressive generation.

\begin{table}[htbp]
\renewcommand{\arraystretch}{0.9}  
\centering
\footnotesize
\caption{Preprocessing time (seconds) of semi-supervised LLM4TAG methods at $r=0.5$ on $\G$ (Full Methods) and $\G'$ (\algo{}).}
\vspace{-2ex}
\label{tab:distillation_llm4g_preprocessing}
\addtolength{\tabcolsep}{-0.3em}  
\begin{tabular}{l c *{3}{c}} 
\toprule
\multirow{2}{*}{\textbf{Dataset}} & \multirow{2}{*}{\textbf{\makecell{Distillation\\ Method}}} & \multicolumn{3}{c}{\textbf{Baselines}} \\
\cmidrule(lr){3-5}  
& & \GNNLLM & \ENGINE & \TAPE \\
\midrule
\multirow{2}{*}{\em Cora}       
& Full Methods       & 22.5 & 111.0  &  2784.7   \\
& \algo{}               &  3.9  &  6.8 &  50.0  \\
\midrule
\multirow{2}{*}{\em History}    
& Full Methods       & 385.2 & 1743.1 & 32756.0 \\
& \algo{}         &  5.6 & 8.5 &  75.0  \\
\bottomrule
\end{tabular}
\end{table}

\textbf{Preprocessing Time Analysis.} 
The preprocessing time is substantially reduced on the distilled graph $\G'$ compared to the original graph $\G$ for all three methods. The reduction is most pronounced for \TAPE, which requires invoking LLMs to generate predictions and explanations for each node. On \textit{Cora}, \TAPE's preprocessing time drops from 2,784.7s to 50.0s (55.7$\times$ speedup); on the larger \textit{History} dataset, it decreases from 32,756.0s to 75.0s (436.7$\times$ speedup). This dramatic improvement stems from \algo{} reducing the graph size, thereby minimizing the number of expensive LLM API calls. \GNNLLM and \ENGINE also benefit from distillation, though with more modest reductions, as they only require forward passes through the LLM to obtain embeddings or intermediate representations, avoiding the costly autoregressive generation process. Due to the early stopping, \GNNLLM spends less time on the larger dataset {\em History}. 

\begin{table}[htbp]
\renewcommand{\arraystretch}{0.9}  
\centering
\footnotesize
\caption{Training time (seconds) of semi-supervised LLM4TAG methods at $r=0.5$ on $\G$ (Full Methods) and $\G'$ (\algo{}).}
\vspace{-2ex}
\label{tab:distillation_llm4g_training}
\addtolength{\tabcolsep}{-0.3em}  
\begin{tabular}{l c *{3}{c}} 
\toprule
\multirow{2}{*}{\textbf{Dataset}} & \multirow{2}{*}{\textbf{\makecell{Distillation\\ Method}}} & \multicolumn{3}{c}{\textbf{Baselines}} \\
\cmidrule(lr){3-5}  
& & \GNNLLM & \ENGINE & \TAPE \\
\midrule
\multirow{2}{*}{\em Cora}       
& Full Methods       & 4.2 &  10.1  & 7.1 \\
& \algo{}              &  3.3 & 10.0  &   4.3 \\
\midrule
\multirow{2}{*}{\em History}    
& Full Methods        & 3.5 & 121.3 & 36.4 \\
& \algo{}               &  3.2 &   86.4 &   35.0 \\
\bottomrule
\end{tabular}
\end{table}

\textbf{Training Time Analysis.} 
The training time reductions on $\G'$ are comparatively modest across all methods. For \GNNLLM and \TAPE, training times remain similar between $\G$ and $\G'$ because the downstream GNN training is already efficient even on the full graph. \ENGINE shows a more noticeable reduction on \textit{History} (from 121.3s to 86.4s), likely due to its architecture-specific fusion mechanism benefiting from the smaller distilled graph.

\begin{table*}[!ht]
\centering
\small
\setlength{\tabcolsep}{1.8pt}
\caption{Dataset statistics before/after distillation at $r=0.05$: nodes, edges, storage (MB), compression ratio, and accuracy.}
\label{tab:distillation_single_col}
\vspace{-2ex}
\begin{tabular}{l|c|rr|rrrr|r|r|r}
\toprule
\textbf{Dataset} & \textbf{Stage} & \textbf{Nodes} & \textbf{Edges} & 
\textbf{$\XM$(MB)} & \textbf{$\AM$ (MB)} & \textbf{$\YM$ (MB)} & \textbf{$\mathcal{S}$ (MB)} & 
\textbf{Total (MB)} & \textbf{Compression Ratio} & \textbf{Acc (\%)} \\
\midrule
\multirow{2}{*}{\em Cora}
    & Original  & 2708 & 5278 & 3.9668 & 0.1007 & 0.0207 & 2.3036 & 6.3918 & -- & 77.7$\pm$2.0  \\
    & Synthetic & 7 & 47 & 0.0103 & 0.0009 & 0.0001 & 0.0045 & 0.0158 & 403.39$\times$ & 77.5$\pm$0.6 \\
\midrule
\multirow{2}{*}{\em Citeseer}
    & Original  & 3186 & 8450 & 4.6670 & 0.1612 & 0.0243 & 6.2181 & 11.0706 & -- & 74.0$\pm$2.1  \\
    & Synthetic & 6 & 36 & 0.0088 & 0.0007 & 0.0000 & 0.0069 & 0.0164 & 673.20$\times$ & 75.9$\pm$0.5  \\
\midrule
\multirow{2}{*}{\em DBLP}
    & Original  & 14376 & 431326 & 21.0586 & 8.2269 & 0.1097 & 2.4339 & 31.8291 & -- & 77.9$\pm$0.5 \\
    & Synthetic & 4 & 16 & 0.0059 & 0.0003 & 0.0000 & 0.0023 & 0.0085 & 3739.87$\times$ & 77.2$\pm$0.2  \\
\midrule
\multirow{2}{*}{\em Computers}
    & Original  & 87229 & 721081 & 127.7769 & 13.7535 & 0.6655 & 42.7394 & 184.9353 & -- & 72.2$\pm$ 1.9 \\
    & Synthetic & 10 & 99 & 0.0146 & 0.0019 & 0.0001 & 0.0123 & 0.0289 & 6397.35$\times$ &  65.4$\pm$ 3.7 \\
\midrule
\multirow{2}{*}{\em Photo}
    & Original  & 48362 & 500939 & 70.8428 & 9.5547 & 0.3690 & 37.8577 & 118.6242 & -- & 70.0$\pm$ 0.9 \\
    & Synthetic & 12 & 139 & 0.0176 & 0.0027 & 0.0001 & 0.0140 & 0.0344 & 3451.25$\times$ & 64.6$\pm$ 1.4 \\
\midrule
\multirow{2}{*}{\em History}
    & Original  & 41551 & 358574 & 60.8657 & 6.8393 & 0.3170 & 57.3151 & 125.3371 & -- & 70.6$\pm$3.6 \\
    & Synthetic & 11 & 106 & 0.0161 & 0.0020 & 0.0001 & 0.0110 & 0.0292 & 4291.29$\times$ & 75.9$\pm$1.7 \\
\midrule
\multirow{2}{*}{WikiCS}
    & Original  & 11701 & 431726 & 17.1401 & 8.2345 & 0.0893 & 145.8948 & 171.3587 & -- & 75.9$\pm$ 1.2  \\
    & Synthetic & 10 & 100 & 0.0146 & 0.0019 & 0.0001 & 0.0062 & 0.0228 & 7505.11$\times$ & 76.6$\pm$ 0.3 \\
\bottomrule
\end{tabular}
\end{table*}

\section{Statistics of Original and Synthetic TAGs}
\label{appendix:stat}
Table~\ref{tab:distillation_single_col} summarizes the dataset statistics and performance before and after distillation at the reduction ratio $r=0.05$. We report the number of nodes and edges, storage usage (in MB) for attributes, adjacency matrices, labels, and text, as well as total storage consumption, compression ratio, and classification accuracy (Acc, \%). It can be observed that our distillation method achieves extremely high compression ratios across all seven datasets. The compression ratios range from 403.39$\times$ for the Cora dataset to 7505.11$\times$ for the WikiCS dataset, indicating a significant reduction in data volume. Specifically, the total storage of the original datasets, which ranges from several MB to nearly 200 MB, is compressed to only about 0.01–0.04 MB for the synthetic datasets. Meanwhile, the number of nodes is drastically reduced from thousands to tens (4–12 nodes for synthetic datasets), and the number of edges is reduced from thousands to hundreds (16–139 edges for synthetic datasets), realizing efficient compression of graph structure.
Despite the extreme compression of graph structure and data volume, the synthetic datasets largely preserve the predictive performance of the original graphs. For most datasets, the classification accuracy of the synthetic datasets is close to that of the original datasets with only minor fluctuations. Notably, the synthetic datasets of Citeseer, History, and WikiCS achieve comparable or slightly higher accuracy than their original counterparts, which demonstrates that our distillation method can effectively retain the critical structural, attribute, and semantic information required for downstream classification tasks.
For the Computers and Photo datasets, there is a slight drop in accuracy of the synthetic datasets compared to the original ones, but the accuracy remains at a reasonable level, which is acceptable given the extremely high compression ratio. In summary, the experimental results show that our distillation method can achieve efficient compression of graph datasets while maintaining good performance, verifying the effectiveness and practicality of the method.

\clearpage
% % \onecolumn
\section{Prompts and Results}
\label{app:prompts}
\subsection{Prompts for text synthesis in \algo{}}
First, we present the complete prompt templates used in the keywords-based LLM text synthesis in \algo{}, including the system initialization template, user task instruction template, and corresponding dataset background descriptions.

\stitle{System Prompt Template}
This template serves as the initial system instruction to define the role and global context for the researcher. It provides the basic background of the TAG dataset and lays the foundation for subsequent document summarization tasks.

\stitle{Dataset Background Descriptions}
To complement the placeholders $\text{'dataset\_name'}$ and $\text{'{sys\_dataset}'}$ in the system prompt template, detailed background descriptions of typical datasets used in this task are provided below. Taking $\text{SYS\_CORA}$ and $\text{SYS\_HISTORY}$ as examples, we clearly present the structure and category information of each dataset.

\stitle{User Prompt Template}
Building upon the system prompt, this user prompt template further provides specific task instructions, strict mandatory requirements, and standardized output formats for clustered document summarization. It clarifies the key constraints and operational details to ensure the validity of the generated outputs.

\begin{lstlisting}[style=prompttemplate]
System Prompt Template:
    """
        We have performed clustering on the Text-Attributed Graph dataset and extracted keywords for the texts in each cluster.
        You are a professional researcher tasked with summarizing clustered documents based on keywords. 
        Follow ALL instructions strictly.
        
        # Text-Attributed Graph dataset Background (Global Context)
        This task processes documents from the "{dataset_name}" dataset:
        {sys_dataset}
    """
\end{lstlisting}

\begin{lstlisting}[style=datasetstyle]
SYS_CORA:
    """
    It is an academic citation network in the field of machine learning, consisting of 2708 nodes (each node represents a machine learning-related paper), 10556 edges (each edge represents the citation relationship between papers), and all nodes are divided into 7 category labels, namely Rule_Learning, Neural_Networks, Case_Based, Genetic_Algorithms, Theory, Reinforcement_Learning, Probabilistic_Methods.
    """
\end{lstlisting}
\vspace{2cm}
\begin{lstlisting}[style=datasetstyle]
SYS_HISTORY:
    """
    It is a book recommendation network extracted from the Amazon-Books dataset, consisting of  41551 nodes (each node represents a book with the second-level label History),  503180 edges (each edge represents two books are frequently co-purchased or co-viewed), and all nodes are divided into 12 category labels (the three-level labels of the books): World, Americas, Asia, Military, Europe, Russia, Africa, Ancient Civilizations, Middle East, Historical Study & Educational Resources, Australia & Oceania, Arctic & Antarctica. 
    """
\end{lstlisting}

\begin{lstlisting}[style=prompttemplate]
User Prompt Template:
    """
        Task Instructions:
        Clustered documents' keywords (prioritize first 3-5 as core): {center_key}
        Possible documents' category: {center_label1}
        Strict Mandatory Requirements (VIOLATION = INVALID OUTPUT): 
        1. Token Limit: Total output (including [[LABEL]], [[SUMMARY]], spaces, punctuation) must NOT exceed {tok_lim} tokens.
        2. Output Format (EXACTLY as shown, NO extra lines/words/symbols/line breaks/markdown):
        [[LABEL]] <selected category name>
        [[SUMMARY]] <single short paragraph summary>
        3. [[LABEL]] Rule: 
           - Select ONLY ONE category from the given category ({center_label1}).
           - Output ONLY the category name (no additional text, punctuation, or explanations).
        4. [[SUMMARY]] Rules (ALL must be satisfied):
           - Length: Max {tok_lim} tokens (after [[SUMMARY]]:).
           - Content: 
            - Integrates the first 3-5 core keywords naturally into the summary (avoid keyword stacking); incorporate other keywords only if they fit the context without exceeding token limits;
            - Clearly and concisely summarizes the core theme/key findings/primary focus of the clustered documents (1-2 core points only, no trivial details);
           - Format: Single short paragraph (no bullet points, no line breaks, no lists, no markdown).
    
        Output NOW (strictly follow all rules above):
    """
\end{lstlisting}

\subsection{Keywords and Synthesis in \algo{}}

To elaborate on the specific results of keyword extraction and cluster summarization, we first analyze the {\em Cora} and {\em History} under the condition of $r=0.05$. We here present the keywords and corresponding cluster summarizations. 

Focusing on the class $\text{Neural Network}$ in the {\em Cora}, we extracted 256 keywords from its corresponding cluster, which are listed as follows. 
\begin{lstlisting}[style=keywordstyle]
Keywords: 
    """
    classifiers, neural, perceptrons, classifier, backpropagation, supervised, classification, features, recognition, feature, networks, neuron, learns, learning, neurons, regularization, algorithms, cortex, perceptron, adaptive, nodes, models, ensemble, clustering, prediction, generalization, handwritten, machines, computational, segmentation, detection, wavelet, overfitting, connectionist, inputs, computing, feedforward, multilayer, representations, net, predict, trained, sparse, algorithm, synaptic, modeling, optimization, network, brain, patterns, mlp, signals, learned, knowledge, layers, generative, model, receptive, gradient, processing, kernel, stochastic, implementation, ...
    """
\end{lstlisting}
We can see that the keywords are with high related to the class, shown the effectiveness of our keywords extraction. Based on the extracted keywords of the $\text{Neural Network}$ cluster, we generated three summarizations as synthetic text to capture the core themes of the cluster, as shown below.
\begin{lstlisting}[style=summarystyle]
Candidate 1: 
    """
    [[LABEL]] Neural_Networks
    [[SUMMARY]] The cluster focuses on neural networks, particularly classifiers and perceptrons, emphasizing supervised learning and backpropagation for tasks like classification and pattern recognition. Core themes include network architectures such as multilayer perceptrons (MLPs), regularization techniques to combat overfitting, and feature representation learning. Additional topics span gradient-based optimization, activation functions, generalization, and applications in vision, signal processing, and computational modeling inspired by biological neurons and cortical structures.
    """

Candidate 2:
    """
    [[LABEL]] Neural_Networks
    [[SUMMARY]] The clustered documents focus on neural networks, particularly exploring classifiers, neural architectures, and perceptrons in supervised learning contexts. Core themes include backpropagation for training multilayer networks, feature extraction for classification tasks, and strategies to improve generalization while mitigating overfitting. Additional topics span regularization, adaptive learning, and applications in pattern recognition, such as handwritten digit detection and signal processing, often leveraging feedforward and recurrent network models.
    """
Candidate 3: 
    """
    [[LABEL]] Neural_Networks
    [[SUMMARY]] This cluster focuses on neural networks, emphasizing classifiers, neural architectures, and perceptrons, with significant attention to backpropagation and supervised learning for classification tasks. Core themes include network models like multilayer perceptrons (MLPs), feature representation, generalization, and techniques to mitigate overfitting through regularization. Additional topics span learning algorithms, adaptive systems, and applications in pattern recognition, image processing, and computational neuroscience, often leveraging feedforward and recurrent structures, activation functions, and gradient-based optimization to improve prediction accuracy and model robustness.
    """
\end{lstlisting}

Similarly, for $\text{Ancient Civilizations} $ in {\em History}, the keywords and corresponding summarizations are 
\begin{lstlisting}[style=keywordstyle]
 Keywords: 
     """
     antiquity, archaeologists, antiquities, archeology, egyptology, archaeology, archaeological, ancient, archaeologist, egyptologist, egyptians, mesopotamia, tombs, historia, histories, historians, archaelogy, monuments, archeologist, tomb, inscriptions, byzantium, babylon, egyptian, romans, byzantine, greeks, anthropology, excavated, antique, coptic, egypt, egypts, grecoroman, history, civilization, bookdescription, hammurabi, museum, paganism, rome, bce, books, chronicle, greek, johannes, depictions, temple, roman, book, emperors, tutankhamun, paintings, cairo, mysteries, turks, statuette, deities, myths, oetempledestructiona, artifact, goddesses, literary, empire, autobiography,...
     """
\end{lstlisting} 

\begin{lstlisting}[style=summarystyle]
Candidate 1: 
    """
    [[LABEL]] Ancient Civilizations  
    [[SUMMARY]] This cluster centers on antiquity, archaeologists, and antiquities, with a strong focus on archaeology and egyptology. The documents explore ancient civilizations such as Egypt, Mesopotamia, and the Greco-Roman world, examining archaeological findings, tombs, inscriptions, and artifacts. Key themes include the study of ancient cultures through excavated materials, historical interpretations by renowned scholars, and insights into religious practices, monuments, and daily life in antiquity.
    """

Candidate 2: 
    """
    [[LABEL]] Ancient Civilizations  
    [[SUMMARY]] The cluster centers on antiquity, archaeologists, and antiquities, with a strong focus on archaeology and egyptology. Documents explore ancient civilizations like Egypt, Mesopotamia, and Rome through archaeological findings, tombs, inscriptions, and artifacts, highlighting scholarly research on deities, emperors, and historical narratives.
    """

Candidate 3: 
    """
    [[LABEL]] Ancient Civilizations  
    [[SUMMARY]] The cluster centers on antiquity, archaeologists, and antiquities, with a strong focus on archaeology and egyptology. Documents explore ancient civilizations like Egypt, Mesopotamia, and Rome through archaeological findings, tombs, inscriptions, and artifacts, highlighting scholarly research on deities, emperors, and historical narratives.
    """
\end{lstlisting}

The reason why the summaries for {\em Cora} and {\em History} datasets differ in length is that the optimal output token limit varies for different downstream classification tasks.

\subsection{Synthetic Texts from \algo{}, \DaLLME Comparison}

We here give a comparison between texts synthesized by \algo{} and \DaLLME. A qualitative analysis reveals the superiority of \algo{}, which adopts a keyword-guided LLM framework for cluster summarization, over \DaLLME. \DaLLME directly maps cluster representations to natural language without explicit semantic guidance. As demonstrated in the {\em Cora} and {\em History} examples, \algo{} generates domain-consistent summaries with strong topic alignment and fidelity to domain knowledge by grounding generation in cluster keywords, effectively preserving fine-grained terminology and capturing the full semantic scope of each document cluster. \DaLLME, even with text-pair fine-tuning, is constrained by the inherent information bottlenecks of its V2T paradigm, leading to outputs that are relatively less comprehensive. 

Here are samples of {\em Cora}:
\begin{lstlisting}[style=summarystyle]
STAD:
    """
    [[LABEL]] Neural_Networks
    [[SUMMARY]] The cluster focuses on neural networks, particularly classifiers and perceptrons, emphasizing supervised learning and backpropagation for tasks like classification and pattern recognition. Core themes include network architectures such as multilayer perceptrons (MLPs), regularization techniques to combat overfitting, and feature representation learning. Additional topics span gradient-based optimization, activation functions, generalization, and applications in vision, signal processing, and computational modeling inspired by biological neurons and cortical structures.
    """

DaLLME: 
    """
    "Analysis of Neural Networks," : In this paper we present a method for evaluating the performance of neural networks in a variety of contexts. The method is based on the idea that neural networks can be trained to perform a given task in a given time frame
    """

\end{lstlisting}

Here are samples of {\em History}
\begin{lstlisting}[style=summarystyle]
STAD:
    [[LABEL]] Ancient Civilizations  
    [[SUMMARY]] This cluster centers on antiquity, archaeologists, and antiquities, with a strong focus on archaeology and egyptology. The documents explore ancient civilizations such as Egypt, Mesopotamia, and the Greco-Roman world, examining archaeological findings, tombs, inscriptions, and artifacts. Key themes include the study of ancient cultures through excavated materials, historical interpretations by renowned scholars, and insights into religious practices, monuments, and daily life in antiquity.

DaLLME:
    feature node. BookDescription: &ldquo;This book is a must-read for anyone interested in the history of the ancient world.&rdquo;&mdash;Journal of American History&ldquo;&mdash;Journ

\end{lstlisting}

\subsection{Details of In-Context Learning based on \algo{} }

The \algo{} based ICL prompt generation takes the following steps. First, it loads test node attributes, synthetic attribute and labels from \algo{}. Unique categories of the current dataset are extracted from the loaded labels, forming the actual category list to fill the placeholder in the predefined templates. Then we compute Euclidean distances between each test node's attribute vector and the  synthetic attribute vectors. For each category, the algorithm further conducts statistical analysis on key distance metrics, including the average distance between the test node and all synthetic nodes in the category, the closest distance to any synthetic node in the category, and the index of the closest synthetic node.  \algo{} based ICL is effective because it naturally fuses rich semantic priors with structural and feature similarity from the distilled dataset, which preserves the core information of the original graph. By incorporating category-aware distance guidance and semantic context into prompts, ICL enables the model to directly reason over compact yet representative knowledge without fine-tuning, leading to strong generalization and accurate node classification.

We here give the prompts of two training-free LLMs node classification Direct and {\em neighbor summary} (NS) , with and without the enhance of {\em in-context learning} (ICL) based on our \algo{}. We take {\em Cora} as an samples.

\stitle{System Prompt Template}
\begin{lstlisting}[style=prompttemplate]
    'You are an accurate text graph node classifier. Your task is to categorize text-described nodes into specified categories. Here is the scenario:\n'
\end{lstlisting}

\stitle{Usr Prompt Template}
For Direct and Direct+\algo{}, the user prompts are as follow, 
\begin{lstlisting}[style=prompttemplate]
Direct: 
    """
    {input text}
    Question: Which of the following sub-categories of AI does this paper belong to? Here are the 7 categories: Rule_Learning, Neural_Networks, Case_Based, Genetic_Algorithms, Theory, Reinforcement_Learning, Probabilistic_Methods. Reply only one category that you think this paper might belong to. Only reply the category phrase without any other explanation words.
    
    Answer: 
    """

Diectly + STAD: 
    """
    {input text}
    Question: Which of the following sub-categories of AI does this paper belong to? Here are the 7 categories: Rule_Learning, Neural_Networks, Case_Based, Genetic_Algorithms, Theory, Reinforcement_Learning, Probabilistic_Methods. Reply only one category that you think this paper might belong to. Only reply the category phrase without any other explanation words.
    
    Answer: 
    Note: The closer the distance, the higher the probability of belonging to this class.
    
    Context information by category:
    - Rule_Learning: Average distance = {}, Closest distance = {}, Closest sample info: {}
    - Neural_Networks: Average distance = {}, Closest distance = {}, Closest sample info: {}
    - Case_Based: Average distance = {}, Closest distance = {}, Closest sample info: {}
    - Genetic_Algorithms: Average distance = {}, Closest distance = {}, Closest sample info: {}
    - Theory: Average distance = {}, Closest distance = {}, Closest sample info: {}
    - Reinforcement_Learning: Average distance = {}, Closest distance = {}, Closest sample info: {}
    - Probabilistic_Methods: Average distance = {}, Closest distance = {}, Closest sample info: {}
    
    Answer: 
    """
\end{lstlisting}

We show the pipeline of NS. Firstly, we give the prompt to for neighbor context summarization, 
\begin{lstlisting}[style=prompttemplate]
Neighbor Context Summarization: 
    """
    The following list records some paper related to the current one, with relationship being citation.
    Neighbor's text information: {neighbor_1_text}
    Neighbor's text information: {neighbor_2_text}
    Neighbor's text information: {neighbor_3_text}
    ...
    Please summarize the information above with a short paragraph, find some common points which can reflect the category of this paper.
    """
\end{lstlisting}

Then the query for node classification based on the neighbor context is 
\begin{lstlisting}[style=prompttemplate]
NS: 
    """
    {input text}
    {neighbor context summary}
    Here I give you the content of the node itself and the summary information of its 1st-order neighbors. 
    The relation between the node and its neighbors is 'citation'. Question: Based on these inforamtion, 
    Which of the following sub-categories of AI does this paper(this node) belong to? Here are the 7 categories: Rule_Learning, Neural_Networks, Case_Based, Genetic_Algorithms, Theory, Reinforcement_Learning, Probabilistic_Methods. 
    Reply only one category that you think this paper might belong to. Only reply the category name without any other words.\n\nAnswer:
    """
\end{lstlisting}

Similar to Direct + \algo{}, the query for NS + \algo{} is 
\begin{lstlisting}[style=prompttemplate]
NS + STAD: 
    """
    {input text}
    {neighbor context summary}
    Here I give you the content of the node itself and the summary information of its 1st-order neighbors. 
    The relation between the node and its neighbors is 'citation'. Question: Based on these inforamtion, 
    Which of the following sub-categories of AI does this paper(this node) belong to? Here are the 7 categories: Rule_Learning, Neural_Networks, Case_Based, Genetic_Algorithms, Theory, Reinforcement_Learning, Probabilistic_Methods. 
    Reply only one category that you think this paper might belong to. Only reply the category name without any other words.
    
    Answer:
    Note: The closer the distance, the higher the probability of belonging to this class.
    
    Context information by category:
    - Rule_Learning: Average distance = {}, Closest distance = {}, Closest sample info: {}
    - Neural_Networks: Average distance = {}, Closest distance = {}, Closest sample info: {}
    - Case_Based: Average distance = {}, Closest distance = {}, Closest sample info: {}
    - Genetic_Algorithms: Average distance = {}, Closest distance = {}, Closest sample info: {}
    - Theory: Average distance = {}, Closest distance = {}, Closest sample info: {}
    - Reinforcement_Learning: Average distance = {}, Closest distance = {}, Closest sample info: {}
    - Probabilistic_Methods: Average distance = {}, Closest distance = {}, Closest sample info: {}
    
    Answer: 
    """
\end{lstlisting}

\end{document}